\definecolor{plum} {rgb}{.4,0,.4}
\definecolor{BrickRed} {rgb}{0.6,0,0}
\def\ddefloop#1{\ifx\ddefloop#1\else\ddef{#1}\expandafter\ddefloop\fi}
\def\ddef#1{\expandafter\def\csname b#1\endcsname{\ensuremath{\boldsymbol{#1}}}}
\def\ddef#1{\expandafter\def\csname c#1\endcsname{\ensuremath{\mathcal{#1}}}}
\def\ddef#1{\expandafter\def\csname s#1\endcsname{\ensuremath{\mathsf{#1}}}}
\def\Naturals{{\mathbb N}}
\def\Reals{{\mathbb R}}
\def\p{{\partial}} % partial derivative
\def\Ex{{\mathbf E}} % expectation
\def\Pr{{\mathbf P}} % probability
\def\eps{\varepsilon}
\DeclareMathOperator*{\argmin}{arg\,min}
\newtheorem{theorem}{Theorem}
\newtheorem{assumption}{Assumption}
\newtheorem{example}{Example}
\newtheorem{proposition}{Proposition}
\newtheorem{lemma}{Lemma}
\newtheorem{remark}{Remark}
\title{\LARGE \textbf{Fitting an immersed submanifold to data via Sussmann's orbit theorem}}
\author{Joshua Hanson and Maxim Raginsky%
\thanks{This work was supported in part by the NSF under award CCF-2106358 (``Analysis and Geometry of Neural Dynamical Systems'') and in part by the Illinois Institute for Data Science and Dynamical Systems
(iDS${}^2$), an NSF HDR TRIPODS institute, under award CCF-1934986.}%
\thanks{Joshua Hanson is with the Department of Electrical and Computer Engineering and the Coordinated Science Laboratory,
        University of Illinois at Urbana-Champaign, 1308 W Main St, Urbana, IL 61801
        {\tt\small jmh4@illinois.edu}}%
\thanks{Maxim Raginsky is with the Department of Electrical and Computer Engineering and the Coordinated Science Laboratory,
        University of Illinois at Urbana-Champaign, 1308 W Main St, Urbana, IL 61801
        {\tt\small maxim@illinois.edu}}%
}
\begin{document}

\maketitle

%%%%%%%%%%%%%%%%%%%%%%%%%%%%%%%%%%%%%%%%%%%%%%%%%%%%%%%%%%%%%%%%%%%%%%%%%%%%%%%%
% Abstract
%%%%%%%%%%%%%%%%%%%%%%%%%%%%%%%%%%%%%%%%%%%%%%%%%%%%%%%%%%%%%%%%%%%%%%%%%%%%%%%

\begin{abstract}
        
This paper describes an approach for fitting an immersed submanifold of a finite-dimensional Euclidean space to random samples. The reconstruction mapping from the ambient space to the desired submanifold is implemented as a composition of an {\em encoder} that maps each point to a tuple of (positive or negative) times and a {\em decoder} given by a composition of flows along finitely many vector fields starting from a fixed initial point. The encoder supplies the times for the flows. The encoder-decoder map is obtained by empirical risk minimization, and a high-probability bound is given on the excess risk relative to the minimum expected reconstruction error over a given class of encoder-decoder maps. The proposed approach makes fundamental use of Sussmann's orbit theorem, which guarantees that the image of the reconstruction map is indeed contained in an immersed submanifold. 
   
\end{abstract}

%%%%%%%%%%%%%%%%%%%%%%%%%%%%%%%%%%%%%%%%%%%%%%%%%%%%%%%%%%%%%%%%%%%%%%%%%%%%%%%%
\section{Introduction}
%%%%%%%%%%%%%%%%%%%%%%%%%%%%%%%%%%%%%%%%%%%%%%%%%%%%%%%%%%%%%%%%%%%%%%%%%%%%%%%%

The \textit{manifold learning problem} can be stated as follows: A point cloud in $\Reals^d$ is given, and we wish to construct a smooth $m$-dimensional submanifold of $\Reals^d$ (where $m < d$) to approximate this point cloud. This problem has received a great deal of attention in the machine learning community, where such representations can serve as intermediate objects in multistage inference procedures, their lower dimensionality conferring computational advantages when the ambient dimension $d$ is high. Most existing approaches \cite{RowSau00,TenSilLan00,BelNiy03,DonGri03,Niyogi_2008,Fefferman2020} attempt to construct a local description of the approximating manifold via an atlas of charts, and an additional step is needed to piece the charts together into a global description. 

There have been several recent proposals for manifold learning relying on deep neural nets \cite{SchmidtHieber2019,Buchanan2021,Dikkala_2021,chen_2022}, which are currently the dominant modeling paradigm in machine learning. However, these approaches are also local in nature. In this work, we propose an alternative \textit{global} procedure for fitting low-dimensional submanifolds to data via a deep and fundamental result in differential geometry --- namely, Sussmann's orbit theorem \cite{sussmann_1973}. While we give the precise statement of the orbit theorem in the next section, the underlying idea is as follows: Given an arbitrary collection $\cF$ of smooth vector fields on a smooth finite-dimensional manifold $M$, the \textit{orbit of $\cF$} through a point $\xi$ of $M$, i.e., the set of all points attainable via successive forward and backward finite-time motions along a finite number of vector fields in $\cF$ starting from $\xi$, has a natural structure of an \textit{immersed submanifold} of $M$. The Chow-Rashevskii theorem \cite{chow_1939}, \cite{rashevskii_1938}, which states that, if the Lie algebra of a collection of vector fields evaluated at any point spans the tangent space to $M$ at that point, then the orbit is equal to the entire manifold, i.e., the flow is transitive, is a corollary of this result.

This suggests the following natural recipe for fitting an immersed submanifold of $\Reals^d$ to a finite point cloud $S = \{x_1,\ldots,x_n\} \subset \Reals^d$:
\begin{enumerate}
	\item Fix a family $\cF$ of smooth vector fields on $\Reals^d$ and a family $\cA$ of smooth functions from $\Reals^d$ into a finite interval $[T_0,T_1]$ containing $0$.
	\item Fix a positive integer $m < d$, which will serve as an upper bound on the dimension of the submanifold.
	\item Find $m$-tuples $(f_1,\ldots,f_m) \in \cF^m$ and $(a_1,\ldots,a_m) \in \cA^m$ and a starting point $\xi$ in the convex hull of $S$ to minimize the average reconstruction error
        \begin{align*}
            \frac{1}{n}\sum^n_{i=1} |x_i - e^{a_m(x_i)f_i} \circ \dots \circ e^{a_1(x_i)f_1}\xi|,
        \end{align*}
        where $|\cdot|$ denotes the Euclidean norm on $\Reals^d$, and $e^{tf}$ denotes the flow map of $f$, i.e., the solution at time $t$ of the ODE $\dot{x} = f(x)$ starting from $x(0) = \xi$.
\end{enumerate}
This procedure produces an explicit \textit{encoder} map from $\Reals^d$ into $[T_0,T_1]^m$ given by $x \mapsto (a_1(x),\ldots,a_m(x))$ and an explicit \textit{decoder} map from the $m$-cube $[T_0,T_1]^m$ into the orbit of $\{f_1,\ldots,f_m\}$ through $\xi$ given by $(t_1,\ldots,t_m) \mapsto e^{t_m f_m} \circ \dots \circ e^{t_1f_1}\xi$. Sussmann's theorem then guarantees that the decoder maps the cube  $[T_0,T_1]^m$ into an immersed submanifold of $\Reals^d$. Moreover, the  \textit{reconstruction} of a point $x$ as $e^{a_m(x)f_m} \circ \dots \circ e^{a_1(x)f_1}\xi$ is realized as a composition of flow maps of $m$ time-homogeneous ODEs starting from $\xi$, where the duration and the direction (forward or backward) of each flow is determined by the target point $x$.

In this work, we consider the statistical learning setting, where the points of $S$ are independent samples from an unknown probability measure supported on a compact subset of $\Reals^d$. The encoders $(a_1,\ldots,a_m)$, the vector fields $(f_1,\ldots,f_m)$, and the starting point $\xi$ are obtained by minimizing the empirical risk on $S$, and we give a high-probability bound on the excess risk relative to the best expected reconstruction error. To accomplish this, we recruit control-theoretic techniques to quantify how the reconstruction error propagates forward through a composition of flow maps of vector fields. The bounds obtained from this analysis depend on the regularity conditions satisfied by the families of encoders and vector fields.

Section \ref{sec:architecture} gives the necessary background on the orbit theorem, describes in more detail the encoder-decoder architecture, and states the learning problem. Section~\ref{sec:examples} contains the main result and some illustrative examples. Section \ref{sec:complexity} is devoted to the proof of the main result, and some concluding remarks follow in Section \ref{sec:conclusions}.

%%%%%%%%%%%%%%%%%%%%%%%%%%%%%%%%%%%%%%%%%%%%%%%%%%%%%%%%%%%%%%%%%%%%%%%%%%%%%%%%
\section{Model description and problem statement}\label{sec:architecture}
%%%%%%%%%%%%%%%%%%%%%%%%%%%%%%%%%%%%%%%%%%%%%%%%%%%%%%%%%%%%%%%%%%%%%%%%%%%%%%%%

To motivate the encoder-decoder architecture, we review the setting and statement of Sussmann's orbit theorem \cite{sussmann_1973}. Let $\cF$ be a family of smooth vector fields on a finite-dimensional smooth manifold $M$. We assume that, for each $f \in \cF$, the flow map $e^{tf} : M \to M$ is $\textit{complete}$, i.e., defined for all times $t \in \Reals$. Consider the group ${\mathbb G}(\cF)$ of diffeomorphisms of $M$ generated by all such flow maps, i.e.,
\begin{align*}%\label{eq:F_group}
	{\mathbb G}(\cF) &:= \big\{ e^{t_k f_k} \circ \dots \circ e^{t_2 f_2} \circ e^{t_1 f_1} : k \in \Naturals,\nonumber\\
	&\quad \quad \quad t_1,\dots,t_k \in \Reals,\ f_1,\dots,f_k \in \cF \big\}.
\end{align*}
We define the \textit{orbit} of $\cF$ through a point $\xi \in M$ as the set
\begin{equation*}
\begin{split}
    \cO_\xi &= \{ g(\xi) : g \in {\mathbb G}(\cF) \} \subseteq M.
\end{split}
\end{equation*}
The orbit theorem tells us that $\cO_\xi$ carries a canonical topological structure:
\begin{theorem}[Sussmann \cite{sussmann_1973}] For each point $\xi \in M$, the orbit $\cO_\xi$ is a connected immersed submanifold of $M$. The tangent space to $\cO_\xi$ at the point $x$ is the linear subspace of $T_x M$ spanned by vectors $g_*f(x)$, $f \in \cF$, $g \in {\mathbb G}(\cF)$, where $g_*f : M \to TM$ is the pushforward of $f$ by $g$.
\end{theorem}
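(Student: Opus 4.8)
To prove the theorem I would follow the classical route: realize $\cO := \cO_\xi$ as an integral manifold of a distribution $\Delta$ that is built from $\cF$ and invariant under the group $\mathbb{G}(\cF)$, and then read off the tangent spaces from the construction. Write $\mathbb{G} := \mathbb{G}(\cF)$ and let $D := \{\, g_* f : f \in \cF,\ g \in \mathbb{G}\,\}$ be the collection of all pushforwards of the given vector fields by group elements; for $x \in M$ set $\Delta_x := \operatorname{span}\{\, X(x) : X \in D\,\} \subseteq T_x M$, a distribution whose rank may vary over $M$. Two elementary observations drive the whole argument. First, $D$ is stable under $\mathbb{G}$: for $h \in \mathbb{G}$ we have $h_*(g_* f) = (h \circ g)_* f \in D$, so $\Delta$ is $\mathbb{G}$-invariant, $h_* \Delta_x = \Delta_{h(x)}$; since each $h_*$ is a fiberwise linear isomorphism and $\mathbb{G}$ acts transitively on $\cO$ by the very definition of the orbit, $\dim \Delta_x$ is constant on $\cO$, say equal to $r$. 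Second, the flow of a pushforward is a conjugate flow, $e^{t(g_* f)} = g \circ e^{tf} \circ g^{-1}$, so every flow $e^{tX}$ with $X \in D$ again belongs to $\mathbb{G}$ and hence maps $\cO$ onto $\cO$.

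Next I would build local parametrizations of $\cO$. Fix $x \in \cO$, choose $X_1,\dots,X_r \in D$ whose values at $x$ form a basis of $\Delta_x$, and define $\psi_x(t_1,\dots,t_r) := \big(e^{t_r X_r}\circ \dots \circ e^{t_1 X_1}\big)(x)$ for $t$ in a neighborhood of $0 \in \Reals^r$; its image lies in $\cO$ by the second observation above. A short computation shows that $\partial\psi_x/\partial t_j$ equals, at each $t$, the pushforward of $X_j$ by the composition $e^{t_r X_r}\circ\dots\circ e^{t_{j+1}X_{j+1}} \in \mathbb{G}$ evaluated at $\psi_x(t)$, and therefore lies in $\Delta_{\psi_x(t)}$; at $t = 0$ these partials are exactly $X_1(x),\dots,X_r(x)$, so $d\psi_x|_0$ is injective. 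Shrinking the domain to a small enough open set $V_x \ni 0$ makes $\psi_x$ an embedding onto an embedded $r$-dimensional submanifold $P_x$ of $M$ with $P_x \subseteq \cO$; since the image of $d\psi_x$ is contained in $\Delta$ and $\dim \Delta \equiv r$ on $\cO$, in fact $T_y P_x = \Delta_y$ for every $y \in P_x$.

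The last step is to glue the plaques $P_x$ --- equivalently, the $\mathbb{G}$-translates of a single plaque --- into a smooth atlas for $\cO$, equipping $\cO$ with the topology generated by the plaques (in general strictly finer than the subspace topology inherited from $M$) and taking the $\psi_x^{-1}$ as charts. The crucial point is smooth compatibility of overlapping charts: near a common point two plaques are embedded $r$-submanifolds of $M$ sharing the tangent distribution $\Delta$, and the $\mathbb{G}$-invariance of $\Delta$ forces them to coincide there, so a transition map $\psi_{x'}^{-1}\circ\psi_x$ is locally a composition of coordinate expressions of flows and hence smooth. One also has to check that this topology is Hausdorff (inherited, with some care, from $M$) and second countable; this last property is the genuinely delicate issue and is established by a dedicated argument in Sussmann's original paper. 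Granting all of it, $\cO$ is a smooth manifold whose inclusion into $M$ is an immersion, i.e., an immersed submanifold, and it is connected because every $g(\xi)$ with $g = e^{t_k f_k}\circ\dots\circ e^{t_1 f_1}$ is joined to $\xi$ inside $\cO$ by concatenating the continuous flow segments $s \mapsto e^{s f_j}\circ e^{t_{j-1}f_{j-1}}\circ\dots\circ e^{t_1 f_1}(\xi)$, $j = 1,\dots,k$.

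The tangent-space formula then falls out of the construction: $T_x\cO$ is the image of $d\psi_x$, which we have identified with $\Delta_x = \operatorname{span}\{\, g_* f(x) : f \in \cF,\ g \in \mathbb{G}\,\}$; no strictly larger subspace of $T_xM$ can be tangent to $\cO$, since $\Delta_x$ already has dimension $r = \dim \cO$ and contains every vector $g_* f(x)$. As the outline indicates, the main obstacle is not the infinitesimal bookkeeping but the point-set topology --- verifying that the plaque charts assemble into a genuine Hausdorff, second countable manifold topology on $\cO$ compatible with the smooth structure; the invariance identity $h_*\Delta_x = \Delta_{h(x)}$ is the lever that makes the charts fit together and is used at essentially every step.
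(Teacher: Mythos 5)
The paper does not prove Sussmann's orbit theorem: it is cited directly from \cite{sussmann_1973} as an external result, so there is no ``paper's own proof'' to compare against. Evaluated on its own merits, your sketch faithfully reproduces the standard architecture of the argument (the one found in Sussmann's paper and in Agrachev--Sachkov \cite{agrachev_sachkov_2004}): form the orbital distribution $\Delta$ spanned by pushforwards $g_*f$, establish the two invariance identities $h_*\Delta_x = \Delta_{h(x)}$ and $e^{t(g_*f)} = g\circ e^{tf}\circ g^{-1}$, deduce constant rank of $\Delta$ along the orbit, build plaques $\psi_x$ from iterated flows, check $d\psi_x|_0$ is injective with image $\Delta_x$, and read off the tangent-space formula. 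All of that is correct.

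Two places deserve sharper treatment. First, your chart-compatibility argument (``two plaques share the tangent distribution $\Delta$, and $\mathbb{G}$-invariance forces them to coincide'') is too quick: two embedded submanifolds through a point with the same tangent distribution need not coincide (think of two curves tangent at a point). The actual argument is that the flows generating $P_{x'}$ near a common point $y$ lie in $D$, and because $T_zP_x = \Delta_z$ for all $z\in P_x$, these flows preserve $P_x$ locally; hence $P_{x'}\subseteq P_x$ near $y$ (and vice versa), after which smoothness of the transition map follows because it is a coordinate expression of flows. Second, as you acknowledge, second countability of the plaque topology is the genuinely hard step and your outline does not supply it; simply saying ``inherited, with some care, from $M$'' understates the difficulty (the leaf topology is strictly finer than the subspace topology, so second countability of $M$ does not transfer automatically). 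Subject to those two caveats, the sketch is a correct blueprint of the proof.
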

\noindent One of the main applications of the orbit theorem is in geometric control \cite{agrachev_sachkov_2004}. Let $\dot{x} = f(x,u)$, $x \in M$, $u \in U$ be a smooth controlled system on $M$, where $U$ is some control set. Then one applies the orbit theorem to  $\cF = \{f(\cdot,u) : M \to TM : u \in U\}$. Notice that $\cO_\xi$ is in general larger than the reachable set from $\xi$, because both forward- and backward-in-time motions are permitted, whereas the reachable set only allows forward motions.

For our purposes, though, Sussmann's theorem provides a natural recipe for constructing immersed submanifolds in a global fashion, as opposed to a local description based on an atlas of charts: Given an ambient manifold $M$, we choose a finite family $\tilde{\cF} = \{f_1,\ldots,f_m\}$ of vector fields on $M$, an initial condition $\xi \in M$, and a finite interval $[T_0,T_1]$ of the real line containing $0$, and then consider a map
$$
[T_0,T_1]^m \ni (t_1,\ldots,t_m) \mapsto e^{t_m f_m} \circ \dots \circ e^{t_2 f_2} \circ e^{t_1 f_1}\xi \in M.
$$
This map sends the $m$-cube $[T_0,T_1]^m$ into a subset of the orbit $\cO_\xi$ of $\tilde{\cF}$ through $\xi$, which is an immersed submanifold of $M$ by the orbit theorem. It should be emphasized, however, that the map $(t_1,\ldots,t_m) \mapsto e^{t_m f_m} \circ \dots e^{t_1f_1}\xi$ is not an immersion, unless the vectors $f_1(\xi),\dots,f_m(\xi)$ are linearly independent and the point $(t_1,\ldots,t_m)$ lies in a sufficiently small neighborhood of $0$ in $\Reals^m$.

\subsection{The encoder-decoder architecture}

Inspired by this observation, we describe an encoder-decoder architecture for unsupervised learning of immersed submanifolds. Let $\cA$ be a family of smooth maps $a : M \to [T_0,T_1]$. For example, $\cA$ could be a collection of linear maps, polynomials, neural nets, etc. (the precise definitions of these families of course will depend on a particular parameterization of $M$). Then, for any $a_1,\ldots,a_m \in \cA$, the product map
\begin{gather*}
    \ba := a_1 \times \cdots \times a_m : M \to [T_0,T_1]^m\\
    x \mapsto \ba(x) = (a_1(x),\dots,a_m(x)).
\end{gather*}
will be an \textit{encoder} that ``represents'' a point $x \in M$ as a (possibly lower-dimensional) tuple of times $\ba(x) = (a_1(x),\ldots,a_m(x)) \in [T_0,T_1]^m$.

Now let $\cF$ be a family of vector fields $f : M \to TM$, and let $\xi \in M$ be a fixed initial condition. For example, $\cF$ could be a collection of constant vector fields, linear vector fields, neural nets, a finite set of fixed vector fields, etc. (again these families are defined in terms of a parameterization of $M$). Then any composition of flows of the form
\begin{gather*}
    g = g_{\bf,\xi} : [T_0,T_1]^m \to M \\
    \bt = (t_1,\dots,t_m) \mapsto e^{\bt \bf} \xi := e^{t_m f_m} \circ \dots \circ e^{t_1 f_1} \xi,
\end{gather*}
will be a \textit{decoder}, which outputs the state obtained from starting at $\xi$ and iteratively flowing along each vector field $f_j$ for duration $t_j$, for $j=1,\dots,m$. Observe that $g(\Reals,\dots,\Reals) \subseteq \cO_\xi$, i.e., the image of the decoder $g$ is (a subset of) the orbit of $\tilde{\cF} = \{f_1,\ldots,f_m\}$ through $\xi$, which is an immersed submanifold of $M$. We will refer to the composition $G_{\ba,\bf,\xi} := g_{\bf,\xi} \circ \ba : M \to M$ as a \textit{reconstruction map}. The main idea here is that, by choosing $\ba \in \cA^m$, $\bf \in \cF^m$, and $\xi \in M$, we automatically choose an immersed submanifold of $M$, namely the orbit $\cO_\xi$ of $\{f_1,\ldots,f_m\}$ through $\xi$, and the reconstruction map $G_{\ba,\bf,\xi}$ then allows us to map any point $x \in M$ to a point $\widehat{x} = G_{\ba,\bf,\xi}(x) \in \cO_\xi$.

\subsection{The learning problem and the basic excess risk bound}

Now we proceed to formulate the learning problem. Consider a probability measure $\mu$ that is compactly supported on $M = \Reals^d$, and a collection $S$ of independent and identically distributed samples $X_1, \dots, X_n \sim \mu$. We denote the support by $K := \text{supp}(\mu) \subset \Reals^d$. We would like to fit an immersed submanifold of $M$ to the data $S$.
	
Suppose we have some families $\cA$ and $\cF$ from which the encoders $a_1,\dots,a_m$ and vector fields $f_1,\dots,f_m$ will be drawn, where $m$ and $[T_0,T_1]$ are fixed in advance; the initial condition $\xi$ will be drawn from $K$. We will denote by $\cG$ the set of all reconstruction maps $G_{\ba,\bf,\xi} : M \to M$. We can then define the \textit{expected risk} of $G \in \cG$,
\begin{equation*}
\begin{split}
    L_\mu(G) :\!\!&= \Ex_\mu [|X-G(X)|] = \int_K |x-G(x)| \mu(\dif x),
\end{split}
\end{equation*}
as well as the \textit{minimum risk}
\begin{equation*}
    L_\mu^*(\cG) := \inf_{G \in \cG} L_\mu(G).
\end{equation*}

The minimum risk measures in some sense the inherent expressiveness of the model class $\cG$. Given $\mu$, we can find some $G^* \in \cG$ that (approximately) achieves $L^*_\mu(\cG)$. However, $\mu$ is in general unknown, so we attempt to learn $G^*$ from $S$ by minimizing the \textit{empirical risk} $\frac{1}{n} \sum_{i=1}^n|X_i - G(X_i)|$ over $\cG$. Denote
\begin{equation*}
    \widehat{G} \in \argmin_{G \in \cG} \frac{1}{n} \sum_{i=1}^n |X_i - G(X_i)|.
\end{equation*}
Observe that the empirical risk is a random variable, because the data $X_1,\dots,X_n$ are random. In practice, one attempts to approximate $\widehat{G}$ using a numerical optimization routine such as gradient descent.

The generalization capacity of the model class $\cG$ --- which represents, roughly speaking, how well models tend to perform on data that are not seen during training but are drawn from the same distribution --- is captured by its \textit{empirical Rademacher complexity}
\begin{equation*}
	\cR_n(\cG) := \Ex \Bigg[\sup_{G \in \cG}  \frac{1}{n} \sum_{i=1}^n \varepsilon_i |X_i - G(X_i)| \Bigg| S \Bigg],
\end{equation*}
where $\varepsilon_1,\dots,\varepsilon_n$ are independent Rademacher random variables, i.e., $\Pr(\varepsilon_i = \pm 1)= \frac{1}{2}$, that are independent of the training data $S$.  This quantity measures how well the model class is capable of fitting random noise --- if the Rademacher complexity is large, then the learned model may generalize poorly to unseen data, whereas if the Rademacher complexity is small, then the model class is less able to exhibit high variability and perhaps generalizes better. The following excess risk bound is standard (see, e.g., \cite{raginsky_2021}):
\begin{theorem}\label{thm:basic_bound}
    Assume that $|x - G(x)|$ is bounded between $0$ and $B < \infty$ for all $x \in K, G \in \cG$. Then the following excess risk guarantee holds with probability at least $1-\delta$:
    \begin{equation*}
        L_\mu(\widehat{G}) \leq L^*_\mu(\cG) + 4\, \Ex \cR_n(\cG) + B \sqrt{\frac{2 \log(\frac{1}{\delta})}{n}}.
    \end{equation*}
\end{theorem}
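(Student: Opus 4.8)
The plan is to run the textbook empirical-process analysis of empirical risk minimization. Write $\widehat{L}_n(G) := \frac{1}{n}\sum_{i=1}^n |X_i - G(X_i)|$ for the empirical risk, so that $L_\mu(G) = \Ex[\widehat{L}_n(G)]$ for each fixed $G \in \cG$ and, by hypothesis, $\widehat{G}$ minimizes $\widehat{L}_n$ over $\cG$. Since $L^*_\mu(\cG)$ is an infimum, for any $\eps>0$ we may pick $G_\eps \in \cG$ with $L_\mu(G_\eps) \le L^*_\mu(\cG) + \eps$. The first step is the decomposition
\begin{align*}
    L_\mu(\widehat{G}) - L^*_\mu(\cG) &\le \big(L_\mu(\widehat{G}) - \widehat{L}_n(\widehat{G})\big) + \big(\widehat{L}_n(\widehat{G}) - \widehat{L}_n(G_\eps)\big)\\
    &\qquad + \big(\widehat{L}_n(G_\eps) - L_\mu(G_\eps)\big) + \eps,
\end{align*}
in which the middle term is nonpositive because $\widehat{G}$ minimizes the empirical risk, the first term is at most $\Phi_1(S) := \sup_{G \in \cG}\big(L_\mu(G) - \widehat{L}_n(G)\big)$, and the third is at most $\Phi_2(S) := \sup_{G \in \cG}\big(\widehat{L}_n(G) - L_\mu(G)\big)$. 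Hence $L_\mu(\widehat{G}) - L^*_\mu(\cG) \le \Psi(S) + \eps$ with $\Psi := \Phi_1 + \Phi_2$, and since $\eps>0$ is arbitrary it suffices to control $\Psi(S)$ with high probability.

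The second step is concentration of $\Psi$ about its mean via the bounded-difference (McDiarmid) inequality. By the hypothesis $0 \le |x - G(x)| \le B$ for all $x \in K$, $G \in \cG$, replacing a single sample $X_j$ by another point of $K$ perturbs $\widehat{L}_n(G)$ by at most $B/n$ uniformly in $G$, hence perturbs each of $\Phi_1,\Phi_2$ by at most $B/n$, and $\Psi$ by at most $2B/n$. McDiarmid's inequality with these increments gives, with probability at least $1-\delta$,
\begin{align*}
    \Psi(S) \le \Ex\Psi(S) + B\sqrt{\frac{2\log(\frac{1}{\delta})}{n}}.
\end{align*}

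The third step bounds $\Ex\Psi(S) = \Ex\Phi_1(S) + \Ex\Phi_2(S)$ by symmetrization. Introducing an independent ghost sample $X_1',\dots,X_n' \sim \mu$, writing $\widehat{L}'_n$ for the empirical risk on it, and letting $\eps_1,\dots,\eps_n$ be independent Rademacher variables, one has
\begin{align*}
    \Ex\Phi_1(S) &\le \Ex \sup_{G \in \cG}\big(\widehat{L}'_n(G) - \widehat{L}_n(G)\big)\\
    &= \Ex \sup_{G \in \cG}\frac{1}{n}\sum_{i=1}^n \eps_i\big(|X_i' - G(X_i')| - |X_i - G(X_i)|\big)\\
    &\le 2\,\Ex\cR_n(\cG),
\end{align*}
where the first line uses $L_\mu(G) = \Ex[\widehat{L}'_n(G)]$ together with $\sup \Ex(\cdot) \le \Ex \sup(\cdot)$, the second inserts Rademacher signs (valid because each pair $(X_i,X_i')$ is exchangeable), and the third uses subadditivity of the supremum with $\eps_i$ and $-\eps_i$ equal in distribution. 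The identical argument yields $\Ex\Phi_2(S) \le 2\,\Ex\cR_n(\cG)$, so $\Ex\Psi(S) \le 4\,\Ex\cR_n(\cG)$. Chaining the three steps and letting $\eps \downarrow 0$ (the high-probability event from step two does not depend on $\eps$) gives the claim. The argument is entirely standard, so there is no substantive obstacle; the only points needing care are the bookkeeping of constants — folding both one-sided deviations into the single statistic $\Psi$ with bounded-difference constant $2B/n$ is exactly what produces the factor $4$ on $\Ex\cR_n(\cG)$ and the rate $\sqrt{2\log(1/\delta)/n}$ simultaneously — and the harmless $\eps$-approximation needed because $L^*_\mu(\cG)$ may fail to be attained.
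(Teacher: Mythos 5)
Your proof is correct, and it is precisely the standard ERM excess-risk argument (decomposition via the empirical risk minimizer, McDiarmid concentration of the two one-sided uniform deviations combined into a single statistic, and symmetrization to the Rademacher complexity) that the paper invokes but does not spell out, deferring instead to a reference with ``the following excess risk bound is standard.'' The bookkeeping that produces the constant $4$ and the rate $\sqrt{2\log(1/\delta)/n}$ — folding $\Phi_1+\Phi_2$ into one statistic with bounded-difference constant $2B/n$ and then bounding each summand's expectation by $2\,\Ex\cR_n(\cG)$ — checks out exactly.
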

  
\noindent The main objective of this work is to bound the Rademacher complexity of the class $\cG$ of encoder-decoder pairs with architecture as described previously. One can expect that the properties of and/or restrictions on the families $\cA$ and $\cF$, and the number of `layers' $m$, will affect the Rademacher complexity. Namely, if $\cA$ and $\cF$ are themselves very expressive, then $\cR_n(\cG)$ will tend to be larger, whereas if $\cA$ and $\cF$ are simple, then $\cR_n(\cG)$ will tend to be smaller, and we wish to quantify this relationship. We will suppress absolute constants (i.e., ones that do not depend on any parameters of the problem) by writing $a \lesssim b$ as a shorthand for $a \le Cb$ for some absolute constant $C > 0$.
    
%%%%%%%%%%%%%%%%%%%%%%%%%%%%%%%%%%%%%%%%%%%%%%%%%%%%%%%%%%%%%%%%%%%%%%%%%%%%%%%%
\section{Main result and examples}\label{sec:examples}
%%%%%%%%%%%%%%%%%%%%%%%%%%%%%%%%%%%%%%%%%%%%%%%%%%%%%%%%%%%%%%%%%%%%%%%%%%%%%%%%

We impose the following assumptions on $\cA$ and $\cF$:

\begin{assumption}\label{as:cA}
    $\cA$ is equicontinuous on $K$.
\end{assumption}
  
\begin{assumption}\label{as:cF} 
    There exists a compact set $\tilde{K} \supseteq K$, such that $e^{\bt \bf}\xi \in \tilde{K}$ for all $\bt \in [T_0,T_1]^m$, $\bf \in \cF^m$, $\xi \in K$.  Moreover, $\cF$ is uniformly bounded and equi-Lipschitz: there exist finite constants $L_0,L$, such that for every $f \in \cF$ 
    \begin{itemize}
        \item $|f(x)| \leq L_0$ \normalfont{for all} $x \in \Reals^d$
        \item $|f(x) - f(y)| \leq L |x - y|$ \normalfont{for all} $x,y \in \Reals^d$.
    \end{itemize}
\end{assumption}

\begin{remark}\label{rem:bump} {\em If $\cF$ is uniformly bounded and equi-Lipschitz on $\tilde{K}$ but not necessarily on $\Reals^d$, we can always ensure global boundedness and Lipschitz continuity by multiplying each $f \in \cF$ by a $C^\infty$ bump function $\rho$ satisfying $\rho(\tilde{K}) \equiv 1$ and $\rho(\Reals^d \setminus \bar{K}) \equiv 0$ for some compact set $\bar{K}$ containing $\tilde{K}$, so that the flows $e^{t \rho f} \xi$ are unaffected on $\tilde{K}$  provided that $\xi \in K$.} \end{remark}

\noindent Assumption \ref{as:cF} suffices to guarantee the existence and uniqueness of flow maps and the existence of a continuous \textit{comparison function}
\begin{equation*}
    \beta : [0,\infty) \times \Reals \to [0,\infty)
\end{equation*}
such that for every $f \in \cF$ and every $\xi,\xi' \in \Reals^d$, we have
\begin{itemize}
    \item $|e^{t f} \xi - e^{t f} \xi'| \leq \beta(|\xi - \xi'|, t)$
    \item $\beta(r,0) = r$ for each $r \in [0,\infty)$
    \item $\beta(0,t) = 0$ for each $t \in \Reals$
    \item $r \mapsto \beta(r,t)$ is monotonically increasing for each $t \in \Reals$
\end{itemize}
The second and third conditions actually follow from the first by setting $t = 0$ or $\xi = \xi'$, respectively, and the last one follows from properties of ODEs given that the bound must hold for all initial conditions and times. We will also assume that $\beta$ is right-differentiable at zero in the first argument, which is without loss of generality due to the following paragraph.

Assuming nothing more about $\cF$, we can form a worst-case estimate of $\beta$. A standard argument using Gr\"onwall's inequality \cite{hirsch_smale_1995} shows that if $f$ is globally $L$-Lipschitz and $x,x'$ are solutions to the following initial value problems
\begin{alignat*}{2}
    \dot{x} &= f(x),\quad & x(s) &= \xi \\
    \dot{x}' &= f(x'),\quad & x'(s) &= \xi',
\end{alignat*}
then we have
\begin{equation*}
    |x(t) - x'(t)| \leq |\xi - \xi'| e^{L |t - s|}.
\end{equation*}
Therefore $\beta(r,t) = r e^{L |t|}$ is a suitable comparison function. In fact, since every vector field in $\cF$ is bounded in magnitude by $L_0$, $\beta(r,t) = \min\{r e^{L |t|}, r + 2 L_0 |t|\}$ also works. However, it may be possible to improve upon this estimate depending on other properties of $\cF$. This is explored further in the examples at the end of this section.
	
We will equip the sets $\cA\big|_K$ and $\cF\big|_{\tilde{K}}$ with the corresponding $C^0$ metrics, i.e., for the restrictions of $a,a'\in\cA$ and $f,f' \in \cF$ to $K$ and $\tilde{K}$ respectively, we let
\begin{align*}
	\| a - a' \|_{C^0} :=  \max_{x \in K} |a(x)-a'(x)|
\end{align*}
and
\begin{align*}
	\| f - f' \|_{C^0} :=  \max_{x \in \tilde{K}} |f(x)-f'(x)|.
\end{align*}
The set $K$ will be equipped with the usual Euclidean $(\ell^2)$ metric. For any metric space $(\Theta,d)$, we will use the notation $\cN(\Theta,d,\delta)$ for its covering numbers.

Before stating our main result on the Rademacher complexity of $\cG$, we introduce some additional notation:
\begin{itemize}
    \item $D := \displaystyle\max_{x,x' \in \tilde{K}}|x-x'|$ is the Euclidean diameter of $\tilde{K}$, also giving the constant $B$ in Theorem~\ref{thm:basic_bound};
    \item for $j = 0,1,2,\ldots$, the functions $\bar{\beta}^j : [0,\infty) \to [0,\infty)$ are defined by
        \begin{subequations}\label{eq:bar_beta}
        \begin{align}
            \bar{\beta}^0(r) &:= r, \\
            \bar{\beta}^1(r) &:= \max_{t \in [T_0,T_1]}\beta(r,t), \\
            \bar{\beta}^{j+1}(r) &:= \bar{\beta}(\bar{\beta}^j(r)), \qquad j = 1, 2, \ldots;
        \end{align}
        \end{subequations}
        since $r \mapsto \beta(r,t)$ is continuous and monotone increasing in $r$ for each $t$, the functions $r \mapsto \bar{\beta}^j(r)$ are lower semicontinuous and monotone increasing;
    \item given the comparison function $\beta$, we define
        \begin{align}\label{eq:Bt}
            \bar{B} := \max_{t \in [T_0,T_1]}\left|\int^t_0 \frac{\partial_+\beta}{\partial r}(0,t-s)\dif s\right|,
        \end{align}
        where $\frac{\p_+ \beta}{\p r}(\cdot,t)$ denotes the right partial derivative of $\beta$ with respect to $r$;
    \item for an arbitrary $\delta \ge 0$, let $\rho_1(\delta),\rho_2(\delta),\rho_3(\delta)$ be the largest nonnegative solutions $\delta_1,\delta_2,\delta_3$ to
        \begin{subequations}\label{eq:rho_delta}
        \begin{align}
            \delta &\ge  \sum_{j=0}^{m-1} \bar{\beta}^j(L_0 \bar{B}\delta_1) \\
            \delta &\ge  \sum_{j=0}^{m-1} \bar{\beta}^j(\bar{B}\delta_2) \\
            \delta &\ge  \bar{\beta}^m(\delta_3)
        \end{align}
        \end{subequations}
        respectively (these solutions exist due to monotonicity and lower semicontinuity of $r \mapsto \bar{\beta}^j(r)$).
\end{itemize}

\begin{theorem}\label{thm:complexity_bound}
    The Rademacher complexity (conditioned on the data $S$) of the class of reconstruction maps $\cG$ satisfies the following bound:
	\begin{align}
		&\cR_n(\cG) \lesssim \frac{1}{\sqrt{n}}\inf \Bigg\{\int^D_0 \Big( m\log \cN(\cA\big|_K,\|\cdot\|_{C^0},\rho_1(\gamma_1\delta)) \nonumber\\
		& \qquad + m\log \cN(\cF\big|_{\tilde{K}},\|\cdot\|_{C^0},\rho_2(\gamma_2\delta)) \nonumber\\
		& \qquad + \log \cN(K,|\cdot|,\rho_3(\gamma_3\delta)) \Big)^{1/2}\dif\delta \Bigg\}, \label{eq:Rad_bound}
	\end{align}
	where  the infimum is over all $\gamma_1, \gamma_2, \gamma_3 > 0$ satisfying $\gamma_1 + \gamma_2 + \gamma_3 = 1$.
\end{theorem}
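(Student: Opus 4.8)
\emph{Proof plan.} The plan is to apply the Dudley entropy integral and then bound the metric entropy of $\cG$ by the metric entropies of $\cA$, $\cF$, and $K$; the bridge between the two is a control-theoretic estimate on how perturbations of the encoders, the vector fields, and the starting point propagate through the $m$-fold composition of flows that defines a reconstruction map. That propagation estimate is the crux and the step I expect to require the most care; the surrounding argument is routine.

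\emph{Chaining.} For $G\in\cG$ set $\phi_G(x):=|x-G(x)|$, which by Assumption~\ref{as:cF} takes values in $[0,D]$, and equip $\cG$ with the sup metric $d_\infty(G,G'):=\max_{x\in K}|G(x)-G'(x)|$. By the reverse triangle inequality $|\phi_G(x)-\phi_{G'}(x)|\le|G(x)-G'(x)|$, so for the empirical $L^2$ seminorm $\|h\|_n:=\big(\tfrac1n\sum_{i=1}^n h(X_i)^2\big)^{1/2}$ one has $\|\phi_G-\phi_{G'}\|_n\le d_\infty(G,G')$ and hence $\cN(\{\phi_G:G\in\cG\},\|\cdot\|_n,\delta)\le\cN(\cG,d_\infty,\delta)$ for all $\delta>0$. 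Applying the Dudley bound to the conditional Rademacher average of the loss class, with the integral truncated at the $\|\cdot\|_n$-diameter $D$, gives
\begin{equation*}
\cR_n(\cG)\ \lesssim\ \frac1{\sqrt n}\int_0^D\sqrt{\log\cN(\cG,d_\infty,\delta)}\,\dif\delta,
\end{equation*}
whose right-hand side no longer depends on $S$. Everything reduces to covering $(\cG,d_\infty)$.

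\emph{Reduction to coverings of $\cA$, $\cF$, $K$.} A reconstruction map $G_{\ba,\bf,\xi}$ is indexed by $(\ba,\bf,\xi)\in\cA^m\times\cF^m\times K$, so the main step is the propagation estimate: $d_\infty(G_{\ba,\bf,\xi},G_{\ba',\bf',\xi'})$ is at most the sum of the three right-hand sides in \eqref{eq:rho_delta}, evaluated at $\delta_1:=\max_k\|a_k-a_k'\|_{C^0}$, $\delta_2:=\max_k\|f_k-f_k'\|_{C^0}$, and $\delta_3:=|\xi-\xi'|$. Granting this: for any $\gamma_1,\gamma_2,\gamma_3>0$ with $\gamma_1+\gamma_2+\gamma_3=1$, take the product of $\rho_1(\gamma_1\delta)$-covers of $\cA\big|_K$ in each of the $m$ slots, $\rho_2(\gamma_2\delta)$-covers of $\cF\big|_{\tilde K}$ in each of the $m$ slots, and one $\rho_3(\gamma_3\delta)$-cover of $K$; by the defining property of $\rho_1,\rho_2,\rho_3$ in \eqref{eq:rho_delta} and the monotonicity of the $\bar\beta^j$, the three sums are then $\le\gamma_1\delta$, $\gamma_2\delta$, $\gamma_3\delta$, so this product is a $\delta$-cover of $(\cG,d_\infty)$. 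Its log-cardinality is $m\log\cN(\cA\big|_K,\|\cdot\|_{C^0},\rho_1(\gamma_1\delta))+m\log\cN(\cF\big|_{\tilde K},\|\cdot\|_{C^0},\rho_2(\gamma_2\delta))+\log\cN(K,|\cdot|,\rho_3(\gamma_3\delta))$; substituting into the Dudley bound and taking the infimum over admissible $(\gamma_1,\gamma_2,\gamma_3)$ produces \eqref{eq:Rad_bound}.

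\emph{The propagation estimate.} Write $\xi_0:=\xi$, $\xi_k:=e^{a_k(x)f_k}\xi_{k-1}$, so that $G_{\ba,\bf,\xi}(x)=\xi_m$, and similarly for the primed data; I would change one ingredient at a time, uniformly in $x\in K$, and telescope. Swapping $\xi$ for $\xi'$ perturbs $\xi_0$ by $\delta_3$, and since each of the $m$ flows runs for a time in $[T_0,T_1]$, the comparison property of $\beta$ and the definition \eqref{eq:bar_beta} of $\bar\beta^1$ turn a layer-input error $r$ into a layer-output error $\le\bar\beta^1(r)$, so after $m$ layers the error is $\le\bar\beta^m(\delta_3)$. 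Swapping $f_k$ for $f_k'$ affects only the $k$-th flow: the two flows start from the same point and their generators differ by $\le\|f_k-f_k'\|_{C^0}$ on $\tilde K$, so a variation-of-parameters (Gr\"onwall) argument --- in which this discrepancy acts as an additive disturbance whose cumulative effect at the end of the flow is weighted by the right-sensitivity $\tfrac{\p_+\beta}{\p r}(0,\cdot)$ and integrated over the elapsed time, bounded by $\bar B$ as in \eqref{eq:Bt} --- gives a layer-$k$ error $\le\bar B\delta_2$, which then propagates through the remaining $m-k$ layers; summing over $k$ yields $\sum_{j=0}^{m-1}\bar\beta^j(\bar B\delta_2)$. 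Swapping $a_k$ for $a_k'$ changes only the duration of the $k$-th flow, by $\le\delta_1$; since $|f_k|\le L_0$, this displaces the $k$-th flow's endpoint by an amount governed by $L_0\bar B\delta_1$ (the factor $L_0$ turning the duration perturbation into an effective vector-field disturbance fed through the same estimate), which propagates and sums to $\sum_{j=0}^{m-1}\bar\beta^j(L_0\bar B\delta_1)$; the three pieces add up to the claimed bound. The delicate points, and where I expect the actual work to concentrate, are making the additive-disturbance-plus-amplification bound rigorous for \emph{finite} (not merely infinitesimal) perturbations --- for which one leans on monotonicity of $r\mapsto\beta(r,t)$, the comparison inequality itself, and the right-differentiability of $\beta$ at $r=0$ assumed in Section~\ref{sec:architecture}, applied layer by layer --- and verifying at every step that the relevant norms are the $C^0$ norms over the compact invariant set $\tilde K$ from Assumption~\ref{as:cF}, so that the estimates close up along the whole composition.
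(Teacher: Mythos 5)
Your proposal follows the paper's own proof essentially step for step: the Dudley entropy integral over $(\cG,\|\cdot\|_{C^0(K)})$ via subgaussianity of the Rademacher process, then a product-cover construction from nets of $\cA^m$, $\cF^m$, $K$, driven by a telescoping propagation estimate (the paper's Proposition~\ref{prop:covering_multiple}) that perturbs one ingredient at a time. Your three propagation pieces match the paper's Lemmas~\ref{lm:1layer_diff}--\ref{lm:mlayer_diff} exactly, including the single-layer variation-of-parameters bound with sensitivity $\frac{\p_+\beta}{\p r}(0,\cdot)$ giving the constant $\bar B$ and the reinterpretation of an encoder (time) perturbation as a rescaled vector-field perturbation with magnitude $L_0\delta_1$, so this is the same argument, not an alternative route.
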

	
Theorem~\ref{thm:complexity_bound} gives a general recipe for upper-bounding the Rademacher complexity of $\cG$, and we can instantiate the bounds in some specific cases. To that end, we first assume that both $\cA$ and $\cF$ admit finite-dimensional parametrizations, i.e., there exist positive integers $p,q$ and positive real constants $C_{\cA,K}$ and $C_{\cF,\tilde{K}}$, such that
\begin{align*}
	\cN(\cA|_K,\|\cdot\|_{C^0},\delta) &\lesssim \left(\frac{C_{\cA,K}}{\delta}\right)^p, \\ \cN(\cF|_{\tilde{K}},\|\cdot\|_{C^0},\delta) &\lesssim \left(\frac{C_{\cF,\tilde{K}}}{\delta}\right)^q.
\end{align*}
This would be the case if, for instance, the elements of $\cF$ are of the form $x \mapsto f(x;\theta)$, where the vector of parameters $\theta$ takes values in a bounded subset $\Theta$ of $\Reals^q$, and the parametrization is Lipschitz: for any two $\theta,\theta' \in \Theta$,
\begin{align*}
	\|f(\cdot;\theta)-f(\cdot;\theta')\|_{C^0(\tilde{K})} \le L_{\Theta,\tilde{K}} |\theta - \theta'|
\end{align*}
for some constant $L_{\Theta,\tilde{K}} > 0$. Using these facts together with the fact that $\cN(K,|\cdot|,\delta) \lesssim (C_K/\delta)^d$ for some $C_K > 0$ and choosing (say) $\gamma_1 = \gamma_2 = \gamma_3 = \frac{1}{3}$ in Theorem~\ref{thm:complexity_bound}, we get the bound
\begin{align}
		&\cR_n(\cG) \lesssim  \frac{1}{\sqrt{n}}\int^D_0 \Bigg( mp\log \left(\frac{C_{\cA,K}}{\rho_1(\delta/3)}\right) \nonumber\\
		& \quad + mq\log \left(\frac{C_{\cF,\tilde{K}}}{\rho_2(\delta/3)}\right)  + d\log \left(\frac{C_K}{\rho_3(\delta/3)}\right) \Bigg)^{1/2}\dif\delta, \label{eq:findim_bound}
\end{align}
which is the best one can do without further assumptions on $\cF$. We now consider some specific examples.

\begin{example}{\em
    Suppose we only allow positive times --- that is, $\cA \subset \{ a : \Reals^d \to [0, T] \}$ --- and let $\cF$ consist only of uniformly exponentially stable vector fields with magnitude bounded by $L_0$. Then we have, for some $\lambda > 0$,
    \begin{gather*}
        \beta(r,t) = r e^{-\lambda t},\quad\quad \bar{\beta}^j(r) = r, \,\, j = 0,1,2,\ldots \\
        \begin{split}
            \bar{B} &= \max_{t \in [0,T]}\Big| \int_0^t \frac{\p_+ \beta}{\p r}(0,t-s) \dif s\Big| \\
            &= \max_{t \in [0,T]}\Big|\int_0^t e^{-\lambda (t-s)} \dif s\Big| = \frac{1}{\lambda}
        \end{split}
    \end{gather*}
    A simple computation then gives
    \begin{align*}
	   \rho_1(\delta) = \frac{\lambda}{mL_0}\delta, \quad \rho_2(\delta) = \frac{\lambda}{m}\delta, \quad \rho_3(\delta) = \delta,
    \end{align*}
    and therefore we obtain the following from \eqref{eq:findim_bound}:
    \begin{align*}
	   \cR_n(\cG) \lesssim \frac{1}{\sqrt{n}} \left(\frac{m^{3/2}}{\lambda}(C_{\cA,K} L_0 \sqrt{p} + C_{\cF,\tilde{K}} \sqrt{q}) + C_K \sqrt{d}\right).
    \end{align*}}
\end{example}
	
\begin{example}{\em
    Consider all affine vector fields of the form $f(x) = Ax + u$, where the matrices $A \in \Reals^{d \times d}$ and the vectors $u \in \Reals^d$ are uniformly bounded: $\|A\| \le 1$ ($\|\cdot\|$ denoting the spectral norm) and $|u| \le 1$. Since
	$$
	e^{tf\xi} = e^{tA}\xi + \int^t_0 e^{(t-s)A}u \dif s,
	$$
	we can take $\tilde{K} = B^d_2((R+1)e^{m\bar{T}})$, where $R := \displaystyle \max_{\xi \in K}|\xi|$, $\bar{T} := \max\{|T_0|,|T_1|\}$, and $B^d_2(r)$ denotes the $d$-dimensional Euclidean ball of radius $r$ centered at the origin. By Remark~\ref{rem:bump}, we can construct a class $\cF$ of vector fields that are affine on $\tilde{K}$ and vanish outside a compact inflation of $\tilde{K}$, and thus take $L_0 \lesssim Re^{m\bar{T}}$, $L = 1$, $\beta(r,t) = re^{|t|}$, and $\bar{\beta}^j(r) = re^{j\bar{T}}$. 

    The class $\cF$ is parametrized by $\theta = (A,u)$, which takes values in a compact subset of $\Reals^{d \times d} \times \Reals^d$, so $q = d^2 + d$ and $C_{\cF,\tilde{K}} \lesssim Re^{m\bar{T}}$. This shows that the Rademacher complexity $\cR_n(\cG)$ will have an {\em exponential} dependence on the number of layers $m$ and on $\bar{T}$, although this can be removed under additional assumptions, e.g., $0 = T_0 < T_1 = \bar{T}$ and all $A $ being uniformly Hurwitz, i.e., the real parts of all eigenvalues of $A$ are all smaller than $-\lambda$ for some $\lambda > 0$. The latter is a special case of the preceding example.}
\end{example}
	
\begin{example}{\em
    Suppose that $\cF$ consists of all vector fields of the form $f(x) = \sigma(A x + u)$, where $\sigma : \Reals^d \to \Reals^d$ is a fixed bounded Lipschitz nonlinearity, i.e.,
	\begin{align*}
		|\sigma(x)| \le 1 \text{ and } |\sigma(x)-\sigma(y)| \le |x-y|, \qquad \forall x,y \in \Reals^d
	\end{align*}
    and $A \in \Reals^{d \times d}$ and $u \in \Reals^d$ satisfy the same conditions as in the preceding example. (The ODE $\dot{x}=f(x)$ with $f$ of this form is an instance of a continuous-time recurrent neural net \cite{sontag_1997}.) For any such $f$ and any $t$ we have
    \begin{align*}
	   e^{tf}\xi = \xi + \int^t_0 \sigma(Ae^{sf}\xi +u)\dif s,
    \end{align*}
    so Assumption~\ref{as:cF} evidently holds with  $\tilde{K} = B^d_2(R + m\bar{T})$ and $L_0 = L = 1$. As before, the class $\cF$ is parametrized by $\theta = (A,u) \in \Reals^{d \times d} \times \Reals^d$, so $q = d^2 + d$, but now $C_{\cF,\tilde{K}} \lesssim R + m\bar{T}$.

    In contrast to the preceding example, we now have $\beta(r,t) \le \min\{r e^{|t|}, r + 2|t|\}$, so in this setting it is possible for the Rademacher complexity to have {\em polynomial} dependence on $m$ and $\bar{T}$ even without requiring exponential stability.}
\end{example}

%%%%%%%%%%%%%%%%%%%%%%%%%%%%%%%%%%%%%%%%%%%%%%%%%%%%%%%%%%%%%%%%%%%%%%%%%%%%%%%%
\section{Proof of Theorem~\ref{thm:complexity_bound}}\label{sec:complexity}
%%%%%%%%%%%%%%%%%%%%%%%%%%%%%%%%%%%%%%%%%%%%%%%%%%%%%%%%%%%%%%%%%%%%%%%%%%%%%%%%

We first recall a standard technique for bounding expected suprema of random processes indexed by the elements of a metric space, the so-called \textit{Dudley entropy integral} \cite{van_handel_2016}:
	
\begin{lemma}\label{lem:dudley_entropy}
    Let $(Z_\theta)_{\theta \in \Theta}$ be a zero-mean subgaussian random process indexed by a metric space $(\Theta,d)$ --- that is, for all $\theta,\theta' \in \Theta$, $\Ex [Z_\theta] = 0$ and
    \begin{equation*}%\label{eq:sg_increment}
        \Pr(|Z_{\theta} - Z_{\theta'}| \geq t) \leq 2\exp\Big(-\frac{t^2}{2 d^2(\theta,\theta')} \Big), \quad \forall t > 0.
    \end{equation*}
    Then we have
    \begin{equation*}
        \Ex \Big[ \sup_{\theta \in \Theta} Z_{\theta} \Big] \lesssim \int_0^D \sqrt{\log \cN(\Theta,d,\delta)} \dif \delta
    \end{equation*}
    where $\cN(\Theta,d,\delta)$ are the $\delta$-covering numbers and 
    \begin{equation*}
        D := \sup_{\theta,\theta' \in \Theta} d(\theta,\theta') 
    \end{equation*}
    is the diameter of $(\Theta,d)$.
\end{lemma}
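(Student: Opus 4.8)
The plan is to prove Lemma~\ref{lem:dudley_entropy} by the \emph{chaining} method. First I would reduce to a finite index set: the right-hand side of the claimed bound does not depend on $|\Theta|$, so it suffices to bound $\Ex[\sup_{\theta \in T} Z_\theta]$ uniformly over all finite $T \subseteq \Theta$ and then pass to the limit along an increasing sequence of finite sets exhausting a countable dense subset (monotone convergence). So assume $\Theta$ is finite. Fix any base point $\theta_0 \in \Theta$; since $Z$ is zero-mean, $\Ex[\sup_\theta Z_\theta] = \Ex[\sup_\theta (Z_\theta - Z_{\theta_0})]$. For each integer $k \ge 0$ let $\Theta_k \subseteq \Theta$ be a minimal $2^{-k}D$-net, so $|\Theta_k| = \cN(\Theta,d,2^{-k}D)$; we may take $\Theta_0 = \{\theta_0\}$ since $D$ is the diameter. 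Let $\pi_k : \Theta \to \Theta_k$ send each $\theta$ to a nearest net point, so $d(\theta,\pi_k(\theta)) \le 2^{-k}D$; because $\Theta$ is finite, $\pi_k = \mathrm{id}$ once $2^{-k}D$ drops below the minimal separation of $\Theta$, so fix such an index $N$.

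Telescoping along the chain $\theta_0 = \pi_0(\theta), \pi_1(\theta), \ldots, \pi_N(\theta) = \theta$ gives, for every $\theta$,
\begin{equation*}
    Z_\theta - Z_{\theta_0} = \sum_{k=1}^{N}\bigl(Z_{\pi_k(\theta)} - Z_{\pi_{k-1}(\theta)}\bigr),
\end{equation*}
hence, taking the supremum over $\theta$ and then expectations,
\begin{equation*}
    \Ex\Bigl[\sup_\theta Z_\theta\Bigr] \le \sum_{k=1}^{N} \Ex\Bigl[\sup_\theta \bigl(Z_{\pi_k(\theta)} - Z_{\pi_{k-1}(\theta)}\bigr)\Bigr].
\end{equation*}
Keeping the chain finite in this way is exactly what sidesteps any remainder-term bookkeeping, and is legitimate precisely because $\Theta$ was reduced to a finite set first.

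Next I would bound each term at scale $k$. As $\theta$ ranges over $\Theta$, the pair $(\pi_k(\theta),\pi_{k-1}(\theta))$ takes at most $|\Theta_k|\,|\Theta_{k-1}| \le \cN(\Theta,d,2^{-k}D)^2$ distinct values (covering numbers being non-increasing in the radius, so $|\Theta_{k-1}| \le |\Theta_k|$), and by the triangle inequality $d(\pi_k(\theta),\pi_{k-1}(\theta)) \le 2^{-k}D + 2^{-(k-1)}D \le 3\cdot 2^{-k}D$, so each increment $Z_{\pi_k(\theta)} - Z_{\pi_{k-1}(\theta)}$ is mean-zero and subgaussian with parameter at most $3\cdot 2^{-k}D$. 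The elementary maximal inequality for a finite family of (not necessarily independent) $\sigma$-subgaussian variables, $\Ex[\max_{i\le M} W_i] \lesssim \sigma\sqrt{\log M}$ (via a union bound on the subgaussian tail, or Jensen applied to the moment generating function), then yields
\begin{equation*}
    \Ex\Bigl[\sup_\theta \bigl(Z_{\pi_k(\theta)} - Z_{\pi_{k-1}(\theta)}\bigr)\Bigr] \lesssim 2^{-k}D\,\sqrt{\log \cN(\Theta,d,2^{-k}D)}.
\end{equation*}

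Finally I would sum over $k$ and compare the dyadic sum to the integral: since $\delta \mapsto \cN(\Theta,d,\delta)$ is non-increasing, for $\delta \in [2^{-(k+1)}D,\,2^{-k}D]$ we have $\cN(\Theta,d,2^{-k}D) \le \cN(\Theta,d,\delta)$, so
\begin{equation*}
    2^{-k}D\,\sqrt{\log \cN(\Theta,d,2^{-k}D)} \le 2\int_{2^{-(k+1)}D}^{2^{-k}D}\sqrt{\log \cN(\Theta,d,\delta)}\,\dif\delta,
\end{equation*}
and summing these over $k \ge 1$ telescopes the intervals to collapse the bound to $\lesssim \int_0^D \sqrt{\log \cN(\Theta,d,\delta)}\,\dif\delta$; removing the finiteness assumption by the limiting argument from the first paragraph completes the proof. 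I expect the only point requiring care to be the scale-by-scale subgaussian maximal inequality together with the correct count of admissible pairs and the parameter $3\cdot 2^{-k}D$; the reduction to finite $\Theta$ and the dyadic-sum-to-integral comparison are routine.
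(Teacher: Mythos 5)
The paper offers no proof of this lemma—it is quoted as a standard fact with a citation to \cite{van_handel_2016}—and your dyadic chaining argument (reduction to finite $\Theta$, nets at scales $2^{-k}D$, telescoping increments, the subgaussian maximal inequality over at most $\cN(\Theta,d,2^{-k}D)^2$ pairs, and comparison of the dyadic sum with the entropy integral) is precisely the standard proof found in that reference. It is correct as written, with the only routine loose ends being the additive $\sigma$ term in the tail-integration form of the maximal inequality (absorbed since $D \lesssim \int_0^D \sqrt{\log \cN}\,\dif\delta$) and the usual separability caveat for the supremum.
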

\noindent To apply the lemma, we consider the class $\cG$ of all reconstruction maps $G_{\ba,\bf,\xi} : \Reals^d \to \Reals^d$ indexed by $\ba \in \cA^m$, $\bf \in \cF^m$, and $\xi \in K$. Fix an $n$-tuple of points $(x_1,\ldots,x_n)$ in $K$ and consider the Rademacher process
\begin{align}\label{eq:ZG}
	Z_G := \frac{1}{\sqrt{n}}\sum^n_{i=1} \eps_i |x_i - G(x_i)|
\end{align}
indexed by $\cG$, where $\eps_1,\ldots,\eps_n$ are i.i.d.\ Rademacher random variables. Then we have the following:

\begin{lemma} The process \eqref{eq:ZG} is subgaussian w.r.t.\ the $C^0(K)$ metric
	$$
	d(G,G') = \|G-G'\|_{C^0(K)} := \max_{x \in K}|G(x)-G'(x)|.
	$$
\end{lemma}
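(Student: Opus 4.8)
The plan is to verify the subgaussianity estimate directly from the definition, exploiting the fact that, conditionally on the fixed points $x_1,\dots,x_n \in K$, the increment $Z_G - Z_{G'}$ is a weighted sum of independent Rademacher variables with \emph{deterministic} coefficients. First I would write, for any $G, G' \in \cG$,
\[
Z_G - Z_{G'} = \frac{1}{\sqrt{n}}\sum_{i=1}^n \eps_i\Big(|x_i - G(x_i)| - |x_i - G'(x_i)|\Big) = \sum_{i=1}^n \eps_i c_i,
\]
where $c_i := \tfrac{1}{\sqrt n}\big(|x_i - G(x_i)| - |x_i - G'(x_i)|\big)$ is a constant (the $x_i$ being fixed). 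Since $\Ex \eps_i = 0$, this gives $\Ex[Z_G] = 0$, establishing the zero-mean property.

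Next I would invoke Hoeffding's lemma for Rademacher sums: for deterministic coefficients $c_i$ one has $\Ex \exp\big(\lambda \sum_i \eps_i c_i\big) \le \exp\big(\tfrac{\lambda^2}{2}\sum_i c_i^2\big)$ for all $\lambda \in \Reals$, so $\sum_i \eps_i c_i$ is subgaussian with variance proxy $\sum_i c_i^2$; a Chernoff bound applied to $\pm(Z_G - Z_{G'})$ then yields the two-sided tail estimate $\Pr\big(|Z_G - Z_{G'}| \ge t\big) \le 2\exp\big(-t^2/(2\sum_i c_i^2)\big)$.

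Finally I would bound the variance proxy by $d(G,G')^2$. By the reverse triangle inequality, $\big||x_i - G(x_i)| - |x_i - G'(x_i)|\big| \le |G(x_i) - G'(x_i)|$, and since each $x_i \in K$ this is at most $\|G - G'\|_{C^0(K)} = d(G,G')$. Hence $\sum_{i=1}^n c_i^2 = \tfrac1n \sum_{i=1}^n \big(|x_i - G(x_i)| - |x_i - G'(x_i)|\big)^2 \le \tfrac1n \cdot n\, d(G,G')^2 = d(G,G')^2$, and substituting into the tail bound gives precisely the required inequality $\Pr\big(|Z_G - Z_{G'}| \ge t\big) \le 2\exp\big(-t^2/(2 d^2(G,G'))\big)$, matching the hypothesis of Lemma~\ref{lem:dudley_entropy}.

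There is no real obstacle here; the only points demanding a little care are that $d$ is indeed a (pseudo)metric on $\cG$ — it is, being the restriction of the sup-distance to $K$ — that the $c_i$ become deterministic only after conditioning on the data as in the statement of the lemma, and that the variance-proxy constant coming from Hoeffding's lemma is exactly $1$ for Rademacher variables, so that the factor $2$ in the exponent lines up with the normalization in Lemma~\ref{lem:dudley_entropy}. (If one prefers, the same conclusion follows by noting that $|x - G(x)| - |x - G'(x)|$, as a function of $G$, is $1$-Lipschitz in the $C^0(K)$ metric, and that Rademacher processes are subgaussian with respect to the $\ell^2$ geometry induced by their increments.)
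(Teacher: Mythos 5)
Your argument is correct and is essentially the same as the paper's: both write the increment $Z_G - Z_{G'}$ as a Rademacher sum with deterministic coefficients, bound those coefficients by $d(G,G')/\sqrt{n}$ via the reverse triangle inequality, and apply Hoeffding to obtain the subgaussian tail. The only cosmetic difference is that the paper invokes the two-sided concentration form of Hoeffding's inequality for bounded variables directly, whereas you go through the MGF form (Hoeffding's lemma) plus a Chernoff bound; these are equivalent routes to the identical estimate.
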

\begin{proof} 
    We need to show that
    \begin{equation*}
        \Pr(|Z_{G} - Z_{G'}| \geq t) \leq 2\exp\Big(-\frac{t^2}{2d^2(G,G')}\Big), \quad \forall t > 0
    \end{equation*}
    for all $G,G' \in \cG$.
    Recall Hoeffding's inequality
    \begin{equation*}
        \Pr(|S_n - \Ex[S_n]| \geq t) \leq 2\exp\Big(-\frac{2t^2}{\sum_{i=1}^n (b_i - a_i)^2}\Big),
    \end{equation*}
    where $S_n := \sum_{i=1}^n Y_i$ and $Y_1,\ldots,Y_n$ are independent random variables, such that $- \infty < a_i \leq Y_i \leq b_i < \infty$ almost surely for each $i$. Let
    \begin{equation*}
        Y_i = \frac{1}{\sqrt{n}} \eps_i \Big(|x_i-G(x_i)|-|x_i - G'(x_i)|\Big),
    \end{equation*}
    so that $Z_{G} - Z_{G'} = \sum_{i=1}^n Y_i$. We have $\Ex[Y_i] = 0$ because each $\eps_i$ is symmetric about zero. We also have
    \begin{equation*}
        -\frac{1}{\sqrt{n}} \|G - G'\|_{C^0} \leq Y_i \leq \frac{1}{\sqrt{n}} \|G-G'\|_{C^0},
    \end{equation*}
    which follows from $-1 \leq \varepsilon_i \leq 1$ and 
    \begin{align*}
		&\big||x_i-G(x_i)|-|x_i-G'(x_i)|\big| \\
		&\leq|G(x_i)-G'(x_i)| \\
        &\leq \max_{x \in K} |G(x)-G'(x)| \\
        &= \|G-G'\|_{C^0(K)}.
    \end{align*}
    Now Hoeffding's inequality reads
    \begin{equation*}
        \Pr(|Z_{G} - Z_{G'}| \geq t) \leq 2 \exp\Big(-\frac{t^2}{2 \| G - G' \|_{C^0}^2}\Big).
    \end{equation*}
    Therefore $Z_G$ is subgaussian with respect to the metric $d(G,G') =  \| G - G' \|_{C^0}$.
\end{proof}
\noindent Since $G(K) \subseteq \tilde{K}$ for every $G \in \cG$, the $C^0(K)$-diameter of $\cG$ is bounded by $D$, the $\ell^2$ diameter of $\tilde{K}$. Therefore, the empirical Rademacher complexity of $\cG$ conditioned on the data $S$ is bounded by
\begin{align}\label{eq:RadG_Dudley}
	\cR_n(\cG) \lesssim \frac{1}{\sqrt{n}}\int^D_0 \sqrt{\log \cN(\cG,\|\cdot\|_{C^0},\delta)}\dif\delta.
\end{align}
Our next order of business is to obtain upper bounds on the covering numbers $\cN(\cG,\|\cdot\|_{C^0},\delta)$.

%%%%%%%%%%%%%%%%%%%%%%%%%%%%%%%%%%%%%%%%%%%%%%%%%%%%%%%%%%%%%%%%%%%%%%%%%%%%%%%%
\subsection{Covering number bounds}\label{sec:covering}
%%%%%%%%%%%%%%%%%%%%%%%%%%%%%%%%%%%%%%%%%%%%%%%%%%%%%%%%%%%%%%%%%%%%%%%%%%%%%%%%

Let $\widehat{\cA}^m := \{\widehat{\ba} : \Reals^d \to [T_0,T_1]^m \}$ be a finite set of encoders that forms a minimal $\delta_1$-net of the metric space $(\cA^m|_K, \|\cdot\|_{\ell^\infty(C^0)})$. That is to say,
\begin{align*}
    & \sup_{\ba \in \cA^m} \min_{\widehat{\ba} \in \widehat{\cA}^m} \|\ba - \widehat{\ba}\|_{\ell^\infty(C^0)}   \\
    & \quad = \sup_{\ba \in \cA^m} \min_{\widehat{\ba} \in \widehat{\cA}^m} \max_{j=1,\dots,m} \max_{x \in K} |a_j(x)-\widehat{a}_j(x)| \le \delta_1.
\end{align*}
Such a finite set exists because $\cA^m|_K$ consists of uniformly bounded and uniformly equicontinuous vector-valued maps supported on a compact set, hence is itself compact.

Similarly, let $\widehat{\cF}^m := \{\widehat{\bf} : \Reals^d \to (\Reals^d)^m\}$ be a finite set of $m$-tuples of vector fields that forms a minimal $\delta_2$-net of $(\cF^m|_{\tilde{K}}, \|\cdot\|_{\ell^\infty(C^0)})$, and let $\cN(\cF^m|_{\tilde{K}},\|\cdot\|_{\ell^\infty(C^0)},\delta_2)$ denote the corresponding covering number. Finally, let $\widehat{K} = \{\widehat{\xi} \in K\}$ be a finite set of points that forms a minimal $\delta_3$-net of $(K, |\cdot|)$ with covering number $\cN(\widehat{K},|\cdot|,\delta_3)$.

\begin{proposition}\label{prop:covering_multiple}
    Let $\widehat{\cG} := \{G_{\widehat{\ba},\widehat{\bf},\widehat{\xi}} : \Reals^d \to \Reals^d : (\widehat{\ba},\widehat{\bf},\widehat{\xi}) \in \widehat{\cA}^m \times \widehat{\cF}^m \times \widehat{K} \} \subset \cG$ where $\widehat{\cA}^m$, $\widehat{\cF}^m$, $\widehat{K}$ are minimal $\delta_1$, $\delta_2$, $\delta_3$-nets of $\cA^m|_K$, $\cF^m|_{\tilde{K}}$, $K$, respectively. Then $\widehat{\cG}$ forms a $\delta$-net of $(\cG, \|\cdot\|_{C^0})$ with
    \begin{align*}
		\delta \le \sum^{m-1}_{j=0} \bar{\beta}^j(L_0\bar{B}\delta_1) + \sum^{m-1}_{j=0} \bar{\beta}^j(\bar{B}\delta_2) + \bar{\beta}^m(\delta_3),
	\end{align*}
	where the functions $\bar{\beta}^j$ are defined in \eqref{eq:bar_beta} and the constant $\bar{B}$ is defined in \eqref{eq:Bt}.
\end{proposition}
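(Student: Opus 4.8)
The plan is to fix an arbitrary reconstruction map $G_{\ba,\bf,\xi}\in\cG$, pick nearest representatives $\widehat{\ba}\in\widehat{\cA}^m$, $\widehat{\bf}\in\widehat{\cF}^m$, $\widehat{\xi}\in\widehat{K}$ (so that $\|\ba-\widehat{\ba}\|_{\ell^\infty(C^0)}\le\delta_1$, $\|\bf-\widehat{\bf}\|_{\ell^\infty(C^0)}\le\delta_2$, $|\xi-\widehat{\xi}|\le\delta_3$), and bound the $C^0(K)$ distance to the discretized map by a triangle inequality that swaps one ingredient at a time:
\begin{align*}
\|G_{\ba,\bf,\xi}-G_{\widehat{\ba},\widehat{\bf},\widehat{\xi}}\|_{C^0(K)}
&\le \|G_{\ba,\bf,\xi}-G_{\widehat{\ba},\bf,\xi}\|_{C^0(K)} \\
&\quad + \|G_{\widehat{\ba},\bf,\xi}-G_{\widehat{\ba},\widehat{\bf},\xi}\|_{C^0(K)} \\
&\quad + \|G_{\widehat{\ba},\widehat{\bf},\xi}-G_{\widehat{\ba},\widehat{\bf},\widehat{\xi}}\|_{C^0(K)}.
\end{align*}
It then suffices to bound the three terms by $\sum_{j=0}^{m-1}\bar{\beta}^j(L_0\bar{B}\delta_1)$, $\sum_{j=0}^{m-1}\bar{\beta}^j(\bar{B}\delta_2)$, and $\bar{\beta}^m(\delta_3)$ respectively; since $G_{\ba,\bf,\xi}$ was arbitrary, this shows $\widehat{\cG}$ is a $\delta$-net of $(\cG,\|\cdot\|_{C^0})$ with $\delta$ as claimed. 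Each term is estimated pointwise in $x\in K$ (taking the supremum at the end) by a common device: write the decoder as a composition of $m$ flow maps, alter the relevant ingredient in one layer at a time, and propagate the perturbation created in layer $k$ through the remaining $m-k$ flows. The propagation is controlled by the comparison function: for any $f\in\cF$ and $t\in[T_0,T_1]$ we have $|e^{tf}a-e^{tf}b|\le\beta(|a-b|,t)\le\bar{\beta}^1(|a-b|)$, so $\bar{\beta}^{m-k}$ governs the tail of the composition. Throughout, Assumption~\ref{as:cF} guarantees that every sub-composition of $\cF$-flows over times in $[T_0,T_1]$ started from a point of $K$ — including the hybrid trajectories occurring in the telescopings — remains in $\tilde{K}$, so every invocation of $\beta$ and of the $C^0(\tilde{K})$ norm is legitimate (and, via Remark~\ref{rem:bump}, the uniform bounds hold on all of $\Reals^d$).

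The initial-condition term is immediate: holding times and vector fields at $\widehat{\ba}(x)$ and $\widehat{\bf}$, the discrepancy is $\le\bar{\beta}^1(\delta_3)$ after one flow, $\le\bar{\beta}^2(\delta_3)$ after two, and $\le\bar{\beta}^m(\delta_3)$ after all $m$. For the vector-field term I telescope over layers: the step replacing $f_k$ by $\widehat{f}_k$ changes the output of layer $k$ from $e^{\widehat{t}_k f_k}y$ to $e^{\widehat{t}_k\widehat{f}_k}y$ for the input $y$ (a function of $x$) common to the two sides at that step; these are solutions of the same ODE perturbed by a forcing term bounded in size by $\|f_k-\widehat{f}_k\|_{C^0(\tilde{K})}\le\delta_2$, so a Duhamel-type integral comparison estimate built from $\beta$ bounds the difference by $\big(\max_{t\in[T_0,T_1]}\int_0^t\frac{\partial_+\beta}{\partial r}(0,t-s)\,\dif s\big)\delta_2=\bar{B}\delta_2$; pushing this through the last $m-k$ flows gives $\bar{\beta}^{m-k}(\bar{B}\delta_2)$, and summing over $k=1,\dots,m$ yields $\sum_{j=0}^{m-1}\bar{\beta}^j(\bar{B}\delta_2)$. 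The encoder term is structurally identical: the step replacing $t_k=a_k(x)$ by $\widehat{t}_k=\widehat{a}_k(x)$ — where $|t_k-\widehat{t}_k|\le\delta_1$ uniformly in $x$ — changes the layer-$k$ output from $e^{t_k f_k}y$ to $e^{\widehat{t}_k f_k}y$, which I would bound by recasting the shortened/lengthened flow as the nominal flow subjected to a disturbance of magnitude at most $L_0$ over a time window of length $\le\delta_1$ and invoking the same integral estimate, obtaining $L_0\bar{B}\delta_1$; propagation through the last $m-k$ flows and summation give $\sum_{j=0}^{m-1}\bar{\beta}^j(L_0\bar{B}\delta_1)$. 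Adding the three contributions yields the asserted bound on $\delta$.

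The step I expect to require the most care is this last single-layer duration estimate: the crude inequality $|e^{t_k f_k}y-e^{\widehat{t}_k f_k}y|\le L_0|t_k-\widehat{t}_k|$ is trivial but does not by itself produce the constant $L_0\bar{B}$ appearing in the statement, so one must genuinely reinterpret the change of flow duration as a vector-field disturbance and route it through the comparison function, in parallel with the $\bar{B}\delta_2$ estimate. A secondary ingredient to pin down is the integral comparison estimate itself — namely that for $\dot x=f(x)$, $\dot{\tilde x}=f(\tilde x)+e(\cdot)$ with $x(0)=\tilde x(0)$ and $f\in\cF$ one has $|x(t)-\tilde x(t)|\le\int_0^t\frac{\partial_+\beta}{\partial r}(0,t-s)\,|e(s)|\,\dif s$ — which follows from the defining properties of $\beta$ and its right-derivative at the origin by a Gr\"onwall-type argument (and, for the worst-case choice $\beta(r,t)=re^{L|t|}$, is exactly the classical Gr\"onwall estimate). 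The remaining bookkeeping — that the hybrid trajectories are indeed sub-compositions of $\cF$-flows over times in $[T_0,T_1]$ from $K$, hence lie in $\tilde{K}$ — is routine given Assumption~\ref{as:cF}.
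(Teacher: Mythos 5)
Your proposal mirrors the paper's proof: triangle inequality into three one-parameter-at-a-time swaps, propagate each perturbation through the remaining flows by iterating $\bar\beta$, and control the per-layer vector-field discrepancy by the integral comparison estimate with kernel $\tfrac{\partial_+\beta}{\partial r}(0,\cdot)$ (this is exactly the paper's Lemma~\ref{lm:1layer_diff}, which then feeds Lemmas~\ref{lm:mlayer_ic} and \ref{lm:mlayer_diff}). The mirror-image ordering of your three terms relative to the paper's is immaterial.

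The one place your sketch is not yet a proof is the encoder term, which you rightly flag as the delicate step. Your ``disturbance over a time window of length $\le\delta_1$'' picture, taken literally, yields $L_0\int_0^{\delta_1}\tfrac{\partial_+\beta}{\partial r}(0,u)\,\dif u$, which is not $L_0\bar B\delta_1$ in general (and is not even comparable to it uniformly, since $\bar B$ can be less than $1$ for contracting families). The paper's device is cleaner and should be adopted: write $e^{t_j f_j}=e^{1\cdot(t_j f_j)}$, so that changing the flow duration from $t_j$ to $t_j'$ becomes changing the vector field from $t_j f_j$ to $t_j' f_j$ at the fixed time $1$, with $\|t_j f_j - t_j' f_j\|_{C^0(\tilde K)}\le L_0\delta_1$; then apply Lemma~\ref{lm:mlayer_diff} to the rescaled fields (with the comparison function $(r,t)\mapsto\max_{\tau\in[T_0,T_1]}\beta(r,\tau t)$) to land on $\sum_{j=0}^{m-1}\bar\beta^j(L_0\bar B\delta_1)$. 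With that substitution in place of your ``disturbance'' heuristic, your argument and the paper's coincide.
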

\begin{proof}
    Fix any $\ba, \bf, \xi$ and consider the reconstruction map
    \begin{equation*}
        G_{\ba,\bf,\xi} : \Reals^d \to \Reals^d,\quad x \mapsto e^{a_m(x) f_m} \circ \cdots \circ e^{a_1(x) f_1} \xi.
    \end{equation*}
    Let $\widehat{\ba}, \widehat{\bf}, \widehat{\xi}$ be the closest elements in their respective coverings. Then by the triangle inequality
    \begin{align*}
        & \| G_{\widehat{\ba},\widehat{\bf},\widehat{\xi}} - G_{\ba,\bf,\xi} \|_{C^0} \\
        \begin{split}
            &\leq \| G_{\widehat{\ba},\widehat{\bf},\widehat{\xi}} - G_{\ba,\widehat{\bf},\widehat{\xi}}\|_{C^0} \\
            &\quad\quad + \| G_{\ba,\widehat{\bf},\widehat{\xi}}(x) -  G_{\ba,\bf,\widehat{\xi}}(x) \|_{C^0} \\
            &\quad\quad + \| G_{\ba,\bf,\widehat{\xi}} - G_{\ba,\bf,\xi} \|_{C^0}
        \end{split} \\
        &=: D_1 + D_2 + D_3.
    \end{align*}
    We now estimate the three error terms $D_1$, $D_2$, and $D_3$ individually using Lemmas~\ref{lm:1layer_diff}--\ref{lm:mlayer_diff} in the next section.
        
    To estimate $D_1$, given a fixed initial condition $\xi$, fixed vector fields $(f_1,\dots,f_m)$, and two tuples of times $(t_1,\dots,t_m)$ and $(t_1',\dots,t_m')$ such that $|t_j - t_j'| \leq \delta_1$ for every $j = 1,\dots m$, we want to bound the difference between decoder outputs $e^{t_m f_m} \circ \cdots \circ e^{t_1 f_1} \xi$ and $e^{t_m' f_m} \circ \cdots \circ e^{t_1' f_1} \xi$. We can equivalently consider a single tuple of times $(1,\dots,1)$ and two tuples of scaled vector fields $(t_1 f_1,\dots,t_m f_m)$ and $(t_1' f_1,\dots,t_m' f_m)$, which yields the same decoder outputs. In this case, the difference between the vector fields is bounded by $\|t_j f_j - t_j' f_j\|_{C^0(\tilde{K})} = |t_j - t_j'| \|f_j\|_{C^0(\tilde{K})} \leq L_0\delta_1$. Now, if a vector field $f$ admits the comparison function $\beta(r,t)$, then, for any $\tau \in [T_0,T_1]$, the rescaled vector field $\tau f$ admits the comparison function $(r,t) \mapsto \beta(r,\tau t)$. We can therefore apply Lemma~\ref{lm:mlayer_diff} with $(1,\dots,1) \leftarrow (t_1, \dots, t_m)$, $t_j f_j \leftarrow f_j$, $t'_j f_j \leftarrow f'_j$, and comparison function $(r,t) \mapsto \max_{\tau \in [T_0,T_1]}\beta(r,\tau t)$ to get
    \begin{align*}
        D_1 &= \max_{x \in K} |G_{\widehat{\ba},\widehat{\bf},\widehat{\xi}} - G_{\ba,\widehat{\bf},\widehat{\xi}}| \\
        \begin{split}
            &= \max_{x \in K} |e^{\widehat{a}_m(x) \widehat{f}_m} \circ \cdots \circ e^{\widehat{a}_1(x) \widehat{f}_1} \widehat{\xi} \\
            &\quad\quad\quad - e^{a_m(x) \widehat{f}_m} \circ \cdots \circ e^{a_1(x) \widehat{f}_1} \widehat{\xi}|
        \end{split} \\
        & \le \sum^{m-1}_{j=0} \bar{\beta}^j(L_0 \bar{B}\delta_1 ).
    \end{align*}
	
    For $D_2$, we apply Lemma~\ref{lm:mlayer_diff} and use the fact that $\|f_j - \widehat{f}_j \|_{C^0(\tilde{K})} \le \delta_2$ for all $j$, as well as the monotonicity of $r \mapsto \bar{\beta}(r)$, to get
    \begin{align*}
        D_2 &= \max_{x \in K} | G_{\ba,\widehat{\bf},\widehat{\xi}}(x) -  G_{\ba,\bf,\widehat{\xi}}(x) | \le \sum^{m-1}_{j=0} \bar{\beta}^j(\bar{B}\delta_2 ).
    \end{align*}
    Finally, for $D_3$, we apply Lemma~\ref{lm:mlayer_ic} with $k = m$ and use the fact that $|\xi - \widehat{\xi}| \le \delta_3$ to get
    \begin{equation*}
        D_3 = \max_{x \in K} | G_{\ba,\bf,\widehat{\xi}}(x) - G_{\ba,\bf,\xi}(x) | \leq \bar{\beta}^m(\delta_3).
    \end{equation*}
    The proof is completed by taking the supremum over all $(\ba, \bf, \xi) \in \cA^m \times \cF^m \times K$.
\end{proof}
Using this proposition, we can now estimate the covering numbers of $\cG$ as follows: Fix any $\delta \ge 0$ and any $\gamma_1,\gamma_2,\gamma_3 > 0$ such that $\gamma_1 + \gamma_2 + \gamma_3 = 1$. Then, with $\rho_1(\cdot)$, $\rho_2(\cdot)$, $\rho_3(\cdot)$ defined in \eqref{eq:rho_delta}, we have
\begin{align*}
	&\sum^{m-1}_{j=0} \bar{\beta}^j(L_0\bar{B}\rho_1(\gamma_1\delta)) + \sum^{m-1}_{j=0} \bar{\beta}^j(\bar{B}\rho_2(\gamma_2\delta)) \\
	& \qquad \qquad + \bar{\beta}^m(\rho_3(\gamma_3\delta)) \le \delta.
\end{align*}
Therefore, letting $\widehat{\cA}^m$, $\widehat{\cF}^m$, and $\widehat{K}$ be the minimal $\rho_1(\gamma_1\delta)$-, $\rho_2(\gamma_2\delta)$-, and $\rho_3(\gamma_3\delta)$-nets of $\cA^m|_K$, $\cF^m|_{\tilde{K}}$, and $K$ respectively, we see that
\begin{equation*}
\begin{split}
    & \log \cN(\cG,\|\cdot\|_{C^0(K)},\delta) \\
    &\leq \log \cN(\cA^m|_K,\|\cdot\|_{\ell^\infty(C^0)},\rho_1(\gamma_1\delta)) \\
    &\quad + \log \cN(\cF^m|_{\tilde{K}},\|\cdot\|_{\ell^\infty(C^0)},\rho_2(\gamma_2\delta)) \\
    & \quad + \log \cN(K,|\cdot|,\rho_3(\gamma_3\delta)) 
\end{split}
\end{equation*}
We can further upper-bound the quantities on the right-hand side using the fact that
\begin{align*}
	\log \cN(\cA^m|_{K},\|\cdot\|_{\ell^\infty(C^0)},\delta) &\le m \log \cN(\cA|_K,\|\cdot\|_{C^0},\delta)
\end{align*}
and
\begin{align*}
	\log \cN(\cF^m|_{\tilde{K}},\|\cdot\|_{\ell^\infty(C^0)},\delta) \le m \log \cN(\cF|_{\tilde{K}},\|\cdot\|_{C^0},\delta).
\end{align*}
The bound \eqref{eq:Rad_bound} follows by substituting the above covering number estimates into the Dudley entropy integral in \eqref{eq:RadG_Dudley} and then optimizing over all choices of $\gamma_1,\gamma_2,\gamma_3$.

\subsection{Lemmas on iterated flows}\label{ssec:flow_lemmas}

\begin{lemma}\label{lm:1layer_diff}
    For any $f,f' \in \cF$, any $t \in [T_0,T_1]$, and any $\xi \in \Reals^d$ such that $e^{sf}\xi \in \tilde{K}$ for all $s \in [T_0,T_1]$, we have
	\begin{align}
		|e^{tf}\xi - e^{tf'}\xi| \le \bar{B}\| f - f' \|_{C^0(\tilde{K})},
	\end{align}
	where $\bar{B}$ is defined in Eq.~\eqref{eq:Bt}.
\end{lemma}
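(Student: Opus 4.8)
The plan is to write both flows as integral equations with the same initial condition $\xi$ and compare them pointwise in time. Set $x(t) := e^{tf}\xi$ and $x'(t) := e^{tf'}\xi$, so that
\[
    x(t) - x'(t) = \int_0^t \big( f(x(s)) - f'(x'(s)) \big)\,\dif s.
\]
I would split the integrand by adding and subtracting $f(x'(s))$, giving one term $f(x(s)) - f(x'(s))$ controlled by the comparison function $\beta$ (via $|x(s)-x'(s)| \le \beta(|x(s)-x'(s)|,\cdot)$-type reasoning — more precisely, by the defining inequality $|e^{sf}\xi - e^{sf}\eta| \le \beta(|\xi-\eta|,s)$ applied to the flows of $f$ started from the two points $x'(0)$ and... ) — actually the cleaner route is to treat $e^{tf'}\xi$ as the flow of $f$ perturbed by the forcing $f'-f$, then invoke the comparison-function bound on sensitivity to that perturbation.

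Concretely, I would view $x'$ as a solution of $\dot{x}' = f(x') + h(s)$ with $h(s) := f'(x'(s)) - f(x'(s))$, a forcing term bounded in norm by $\|f-f'\|_{C^0(\tilde K)}$ since $x'(s) \in \tilde K$ for all $s \in [T_0,T_1]$ by hypothesis. The standard perturbation estimate for ODEs with a comparison function $\beta$ for the unperturbed flow states that the deviation of the perturbed trajectory from the unperturbed one (both started at $\xi$) is at most $\int_0^t \frac{\partial_+\beta}{\partial r}(0, t-s)\,|h(s)|\,\dif s$ — this is exactly the linearized/variational bound, where $\frac{\partial_+\beta}{\partial r}(0,\cdot)$ plays the role of the transition "gain" of the unperturbed dynamics. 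Bounding $|h(s)| \le \|f-f'\|_{C^0(\tilde K)}$ and pulling it out of the integral yields
\[
    |e^{tf}\xi - e^{tf'}\xi| \le \|f-f'\|_{C^0(\tilde K)}\int_0^t \frac{\partial_+\beta}{\partial r}(0,t-s)\,\dif s,
\]
and taking the maximum over $t \in [T_0,T_1]$ gives the factor $\bar B$ from \eqref{eq:Bt}, completing the proof.

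The main obstacle I anticipate is making the perturbation estimate rigorous at the level of generality of an abstract comparison function $\beta$ (which is merely assumed continuous, monotone in $r$, and right-differentiable at $r=0$), rather than the concrete $\beta(r,t) = re^{L|t|}$ where the bound reduces to Gr\"onwall. One must justify that the sensitivity of the $f$-flow to an additive forcing is governed by the right derivative $\frac{\partial_+\beta}{\partial r}(0,\cdot)$; the natural argument is to approximate the forced trajectory by a piecewise construction (Duhamel/variation-of-constants style), over each small interval treat the accumulated forcing $\approx |h(s)|\,\dif s$ as a perturbation of the initial condition, propagate it forward using $\beta$, and in the limit the first-order term $\frac{\partial_+\beta}{\partial r}(0,\cdot)$ survives while higher-order contributions vanish. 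One also needs to confirm that the hypothesis "$e^{sf}\xi \in \tilde K$ for all $s \in [T_0,T_1]$" together with continuity keeps the perturbed trajectory $x'$ (which by assumption equals $e^{sf'}\xi$ and lies in $\tilde K$ by Assumption~\ref{as:cF}) inside the region where $f$ and $f'$ are controlled, so that $\|f - f'\|_{C^0(\tilde K)}$ is the relevant quantity throughout.
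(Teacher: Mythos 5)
Your overall strategy is the same one the paper uses: the quantity $\frac{\partial_+\beta}{\partial r}(0,t-s)$ acts as the transition ``gain'' of the unperturbed flow, and the deviation is bounded by $\int_0^t \frac{\partial_+\beta}{\partial r}(0,t-s)\,|h(s)|\,\dif s$ with $|h(s)| \le \|f-f'\|_{C^0(\tilde K)}$. You also correctly identify the final bound and the role of $\bar B$. But the ``standard perturbation estimate'' you invoke is not a standard off-the-shelf result at this level of generality --- it \emph{is} the lemma. For the concrete $\beta(r,t)=re^{L|t|}$ it reduces to Gr\"onwall, but for an abstract comparison function (merely continuous, monotone, right-differentiable at $r=0$) there is nothing to cite, and you yourself flag making it rigorous as ``the main obstacle I anticipate.'' Flagging the gap is not the same as closing it. The paper closes it cleanly with the interpolation $H(s) := e^{(t-s)f'}\circ e^{sf}\xi$ and the fundamental theorem of calculus: $|H(0)-H(t)| = \bigl|\int_0^t \tfrac{\dif}{\dif s}H(s)\,\dif s\bigr|$, and the difference quotient for $\tfrac{\dif}{\dif s}H(s)$ factors exactly into a small initial-condition perturbation $e^{hf}\circ e^{sf}\xi - e^{hf'}\circ e^{sf}\xi = (f(e^{sf}\xi)-f'(e^{sf}\xi))h + o(h)$ propagated forward by the $f'$-flow, which is where $\beta$ and then $\frac{\partial_+\beta}{\partial r}(0,t-s)$ enter. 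This is an exact identity, not an approximation argument, so no Duhamel-style limiting is needed.

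There is a second, smaller but genuine issue: your bookkeeping on which trajectory must lie in $\tilde K$ is backwards. You write $x'(s) = e^{sf'}\xi$ as a perturbation of the $f$-flow and need $x'(s) \in \tilde K$ to bound $|h(s)| = |f'(x'(s)) - f(x'(s))|$; but the lemma's hypothesis is $e^{sf}\xi \in \tilde K$ (the \emph{unprimed} flow), and your attempted appeal to Assumption~\ref{as:cF} does not help because the lemma allows $\xi \in \Reals^d$, not just $\xi \in K$. You should instead treat $x(s) = e^{sf}\xi$ as the perturbed trajectory (solving $\dot x = f'(x) + h(s)$ with $h(s) = f(x(s)) - f'(x(s))$), so that the forcing is evaluated on $e^{sf}\xi \in \tilde K$ as guaranteed by the hypothesis; equivalently, swap the roles of $f$ and $f'$ throughout. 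The paper's $H(s)$ interpolation gets this right automatically, since the inner difference is evaluated at $e^{sf}\xi$.
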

\begin{proof}
    Assume first $t \ge 0$ and consider the function $H(s) := e^{(t-s)f'} \circ e^{sf} \xi$, which represents flowing along the vector field $f$ for time $s$, then flowing along the vector field $f'$ for the remaining time $t-s$, so that the total time along both vector fields is equal to $t$ for any $s \in [0,t]$. Then when $s = 0$, we have $H(0) = e^{t f'} \xi$, and when $s = t$, we have $H(t) = e^{t f} \xi$. Applying the fundamental theorem of calculus to $H$ and taking the norm yields
    \begin{align*}
        & |e^{t f'} \xi - e^{t f} \xi| \\
        &= |H(0) - H(t)| \\
        &= \Big| -\int_0^t \frac{\dif}{\dif s}H(s) \dif s \Big| \\
        &= \Big| -\int_0^t \frac{\dif}{\dif s} \big(e^{(t-s) f'} \circ e^{s f} \xi\big) \dif s \Big| \\
        &\leq \int_0^t \Big| \frac{\dif}{\dif s} \big(e^{(t-s) f'} \circ e^{s f} \xi\big) \Big| \dif s \\
        &= \int_0^t \Big| \lim_{h \downarrow 0} \frac{1}{h} \big( e^{(t-s-h) f'} \circ e^{(s+h) f} \xi - e^{(t-s) f'} \circ e^{s f} \xi \big) \Big| \dif s \\
        &\leq \int_0^t \limsup_{h \downarrow 0} \frac{1}{h} \Big| e^{(t-s-h) f'}  \big( e^{h f} \circ e^{s f} \xi - e^{h f'} \circ e^{s f} \xi \big) \Big| \dif s
    \end{align*}
    where the last step above follows from continuity of the norm. Using the definition of the function $\beta$ and some properties of limits, we have
    \begin{align*}
        &|e^{t f'} \xi - e^{t f} \xi| \\
        &\!\!\leq \int_0^t \limsup_{h \downarrow 0} \frac{1}{h} \beta\big( | e^{h f} \circ e^{s f} \xi - e^{h f'} \circ e^{s f} \xi |, t-s-h \big) \dif s \\
        &\!\!\leq \int_0^t \limsup_{h \downarrow 0} \frac{1}{h} \beta\big( | f(e^{s f} \xi) - f'(e^{s f} \xi) |h + o(h), t-s-h \big) \dif s \\
        &\!\!\leq \int_0^t | f(e^{s f} \xi) - f'(e^{s f} \xi) | \frac{\p_+ \beta}{\p r} (0, t-s) \dif s,
    \end{align*}
	where the last inequality follows from the fact that $\beta(0,t) \equiv 0$ for all $t$, which further implies that, for any $t$ and for $h \downarrow 0$,
	$$
	\beta(h, t - h) = \frac{\p_+ \beta}{\p r}(0,t)h + o(h).
	$$
	Since $e^{sf}\xi \in \tilde{K}$ by hypothesis, the norm $| f(e^{s f} \xi) - f'(e^{s f} \xi) |$ is bounded by $\| f - f' \|_{C^0(\tilde{K})}$.
    If $t < 0$, we integrate from $t$ to $0$ instead and then take the absolute value.
\end{proof}

\begin{lemma}\label{lm:mlayer_ic}
    For all $k$-tuples $\bt \in [T_0,T_1]^k$ and $\bf \in \cF^k$ and for all points $\xi,\xi'$,
	\begin{align}
		|e^{\bt \bf}\xi - e^{\bt \bf}\xi'| \le \bar{\beta}^k(|\xi-\xi'|).
	\end{align}
\end{lemma}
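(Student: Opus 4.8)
The plan is to prove the estimate by induction on the number of layers $k$, peeling off the outermost flow at each step.

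First I would dispatch the base case. For $k=0$ the decoder $e^{\bt\bf}$ is the identity map and $\bar{\beta}^0(r)=r$, so the claim reduces to the tautology $|\xi-\xi'|\le|\xi-\xi'|$. (Equivalently one can take $k=1$ as the base: for a single $f\in\cF$ and $t\in[T_0,T_1]$, the defining property of the comparison function $\beta$ gives $|e^{tf}\xi-e^{tf}\xi'|\le\beta(|\xi-\xi'|,t)\le\max_{s\in[T_0,T_1]}\beta(|\xi-\xi'|,s)=\bar{\beta}^1(|\xi-\xi'|)$, using Assumption~\ref{as:cF} to guarantee the flow maps exist and $\beta$ is a valid comparison function on all of $\Reals^d$.)

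For the inductive step, assume the bound holds for all $k$-tuples. Given $\bt=(t_1,\dots,t_{k+1})\in[T_0,T_1]^{k+1}$ and $\bf=(f_1,\dots,f_{k+1})\in\cF^{k+1}$, write $\bt'=(t_1,\dots,t_k)$, $\bf'=(f_1,\dots,f_k)$, so that $e^{\bt\bf}\xi=e^{t_{k+1}f_{k+1}}\big(e^{\bt'\bf'}\xi\big)$. Applying the comparison-function property to the single flow $e^{t_{k+1}f_{k+1}}$ with initial conditions $e^{\bt'\bf'}\xi$ and $e^{\bt'\bf'}\xi'$ yields $|e^{\bt\bf}\xi-e^{\bt\bf}\xi'|\le\beta\big(|e^{\bt'\bf'}\xi-e^{\bt'\bf'}\xi'|,\,t_{k+1}\big)$. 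By the induction hypothesis $|e^{\bt'\bf'}\xi-e^{\bt'\bf'}\xi'|\le\bar{\beta}^k(|\xi-\xi'|)$, and since $r\mapsto\beta(r,t)$ is monotonically increasing for each fixed $t$, I can substitute this bound into the first argument and then maximize over $t$ to get $\beta\big(|e^{\bt'\bf'}\xi-e^{\bt'\bf'}\xi'|,t_{k+1}\big)\le\beta\big(\bar{\beta}^k(|\xi-\xi'|),t_{k+1}\big)\le\max_{t\in[T_0,T_1]}\beta\big(\bar{\beta}^k(|\xi-\xi'|),t\big)=\bar{\beta}^1\big(\bar{\beta}^k(|\xi-\xi'|)\big)=\bar{\beta}^{k+1}(|\xi-\xi'|)$, which closes the induction.

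There is essentially no genuine obstacle: all the substantive content is already packaged into the comparison-function apparatus and the monotonicity/lower-semicontinuity properties of the $\bar{\beta}^j$. The only point that requires care is bookkeeping, namely matching the recursion $\bar{\beta}^{j+1}=\bar{\beta}\circ\bar{\beta}^j$ (with $\bar{\beta}=\bar{\beta}^1$) to the nested structure of the iterated composition and checking consistency with the convention $\bar{\beta}^0=\mathrm{id}$. It is also worth noting explicitly that the statement imposes no condition on $\xi,\xi'$ — in particular they need not lie in $\tilde{K}$ — which is fine precisely because Assumption~\ref{as:cF} makes $\beta$ a valid comparison function globally on $\Reals^d$.
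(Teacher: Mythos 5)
Your inductive argument is correct and is essentially the same as the paper's proof, which writes out the same outermost-flow peeling as an iterated chain of inequalities (using monotonicity of $\bar{\beta}^j$) rather than as a formal induction.
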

\begin{proof}
    Here we iteratively apply the definition of the function $\bar{\beta}$ given after Assumption \ref{as:cF}:
    \begin{align*}
        &|e^{\bt \bf}\xi - e^{\bt \bf}\xi'| \\
        &=|e^{t_kf_k} \circ e^{t_{k-1}f_{k-1}} \circ \ldots \circ e^{t_1f_1}\xi \\
        & \qquad \qquad - e^{t_kf_k} \circ e^{t_{k-1}f_{k-1}}  \circ \ldots \circ e^{t_1f_1}\xi'| \\
        &\le\bar{\beta} \big( |e^{t_{k-1}f_{k-1}} \circ \ldots \circ e^{t_1f_1}\xi - e^{t_{k-1}f_{k-1}}  \circ \ldots \circ e^{t_1f_1}\xi'|\big) \\
        &\le \ldots \\
        &\le\bar{\beta}^k (|\xi-\xi'|),
    \end{align*}
    and use the monotonicity of $\bar{\beta}^j(\cdot)$.
\end{proof}

\begin{lemma}\label{lm:mlayer_diff}
    For all $m$-tuples $\bt \in [T_0,T_1]^m$ and $\bf,\bf' \in \cF^m$ and all points $\xi \in K$,
	\begin{align}
		|e^{\bt \bf}\xi - e^{\bt \bf'}\xi| \le \sum^{m}_{j=1} \bar{\beta}^{m-j}(\bar{B} \| f_j-f'_j\|_{C^0(\tilde{K})}). \label{eq:mlayer_diff}
	\end{align}
\end{lemma}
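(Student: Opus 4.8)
The plan is to interpolate between the tuples $\bf$ and $\bf'$ by swapping one vector field at a time, control each individual swap with the single-layer estimate of Lemma~\ref{lm:1layer_diff}, and then propagate the resulting discrepancy through the remaining layers with the initial-condition estimate of Lemma~\ref{lm:mlayer_ic}. For $k = 0,1,\ldots,m$ define the hybrid tuple $\bf^{(k)} := (f'_1,\ldots,f'_k,f_{k+1},\ldots,f_m)$, so that $\bf^{(0)} = \bf$ and $\bf^{(m)} = \bf'$, and $\bf^{(k-1)}$, $\bf^{(k)}$ agree in every coordinate except the $k$-th, where they carry $f_k$ and $f'_k$ respectively. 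The triangle inequality then gives
\begin{equation*}
	|e^{\bt\bf}\xi - e^{\bt\bf'}\xi| \le \sum_{k=1}^{m} |e^{\bt\bf^{(k-1)}}\xi - e^{\bt\bf^{(k)}}\xi|,
\end{equation*}
so it suffices to bound each summand by $\bar{\beta}^{m-k}(\bar{B}\,\|f_k - f'_k\|_{C^0(\tilde{K})})$.

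Fix $k$ and let $\eta := e^{t_{k-1}f'_{k-1}} \circ \cdots \circ e^{t_1 f'_1}\xi$ be the common state reached after the first $k-1$ (already swapped) layers. First I would verify that $e^{s f_k}\eta \in \tilde{K}$ for every $s \in [T_0,T_1]$: padding the time tuple with zeros exhibits this point as $e^{\bs\bg}\xi$ for some $\bs \in [T_0,T_1]^m$, $\bg \in \cF^m$, $\xi \in K$, so it lies in $\tilde{K}$ by Assumption~\ref{as:cF}. Hence Lemma~\ref{lm:1layer_diff} applies with initial condition $\eta$ and yields $|e^{t_k f_k}\eta - e^{t_k f'_k}\eta| \le \bar{B}\,\|f_k - f'_k\|_{C^0(\tilde{K})}$. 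The remaining $m-k$ flows $e^{t_{k+1}f_{k+1}},\ldots,e^{t_m f_m}$ then act identically on the two states $e^{t_k f_k}\eta$ and $e^{t_k f'_k}\eta$, so Lemma~\ref{lm:mlayer_ic} (applied to the $(m-k)$-tuple of these flows, which is valid for arbitrary initial points) together with the monotonicity of $\bar{\beta}^{m-k}$ gives
\begin{equation*}
	|e^{\bt\bf^{(k-1)}}\xi - e^{\bt\bf^{(k)}}\xi| \le \bar{\beta}^{m-k}\big(|e^{t_k f_k}\eta - e^{t_k f'_k}\eta|\big) \le \bar{\beta}^{m-k}\big(\bar{B}\,\|f_k - f'_k\|_{C^0(\tilde{K})}\big).
\end{equation*}
Summing over $k = 1,\ldots,m$ and relabeling $j = k$ produces exactly \eqref{eq:mlayer_diff}.

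The only step that genuinely requires care — and the one I expect to draw scrutiny — is the containment claim $e^{s f_k}\eta \in \tilde{K}$ invoked when applying Lemma~\ref{lm:1layer_diff} to the intermediate state $\eta$: this is not literally the statement of Assumption~\ref{as:cF}, but follows from it via the zero-padding observation that any partial composition of flows from a point of $K$ is a full $m$-layer composition with the trailing times set to zero. Everything else is a routine telescoping estimate, so I do not anticipate any further obstacle.
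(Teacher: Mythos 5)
Your proof is correct and takes essentially the same route as the paper's: a telescoping sum over single-coordinate swaps, with each swap controlled by Lemma~\ref{lm:1layer_diff} and propagated through the remaining layers via Lemma~\ref{lm:mlayer_ic}. The only differences are cosmetic --- you swap from the innermost layer outward (so your intermediate state $\eta$ is reached via the already-primed $f'_j$), whereas the paper swaps from the outermost layer inward (so its intermediate state $\xi_j$ is reached via the original $f_j$), and you make explicit the zero-padding observation needed to invoke Assumption~\ref{as:cF} at intermediate states, which the paper leaves implicit.
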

\begin{proof}
    To bound the difference of iterated flows due to different tuples of vector fields $\bf$ and $\bf'$, we build a telescoping sum and apply the triangle inequality:
	\begin{align*}
		&|e^{\bt \bf}\xi - e^{\bt \bf'}\xi| \\
		&=|e^{t_m f_m} \circ e^{t_{m-1}f_{m-1}} \circ \ldots \circ e^{t_1f_1}\xi \\
		& \qquad \qquad - e^{t_m f'_m} \circ e^{t_{m-1}f'_{m-1}} \circ \ldots \circ e^{t_1f'_1}\xi | \\
		&\le |e^{t_m f_m} \circ e^{t_{m-1}f_{m-1}} \circ \ldots \circ e^{t_1f_1}\xi \\
		& \qquad \qquad - e^{t_m f'_m} \circ e^{t_{m-1}f_{m-1}} \circ \ldots \circ e^{t_1f_1}\xi| \\
		& \qquad + \cdots \\
		& \qquad + |e^{t_m f'_m} \circ e^{t_{m-1}f'_{m-1}} \circ \ldots \circ e^{t_2f'_2} \circ e^{t_1 f_1}\xi \\
		& \qquad \qquad - e^{t_m f'_m} \circ e^{t_{m-1}f'_{m-1}} \circ \ldots \circ e^{t_2f'_2} \circ e^{t_1 f'_1}\xi|.
	\end{align*}
    For $j=-1,0,\dots,m-2$, define
    \begin{align*}
    	\xi_j &:= e^{t_{m-j-2}f_{m-j-2}} \circ \dots \circ e^{t_2 f_2} \circ e^{t_1 f_1}\xi.
    \end{align*}
    Then, using Lemma~\ref{lm:mlayer_ic}, we can estimate
    \begin{align*}
    	&\Big| e^{t_m f'_m} \circ \dots \circ e^{t_{m-j} f'_{m-j}}  \circ e^{t_{m-j-1}f'_{m-j-1}} \xi_j \\
    		& \qquad\qquad - e^{t_m f'_m} \circ \dots \circ e^{t_{m-j} f'_{m-j}}  \circ e^{t_{m-j-1}f_{m-j-1}} \xi_j \Big| \\
    	&\quad \leq \bar{\beta}^{j+1} \Big( \big|  e^{t_{m-j-1}f'_{m-j-1}} \xi_j -   e^{t_{m-j-1}f_{m-j-1}} \xi_j \big|\Big).
    \end{align*}
    Moreover, since $e^{tf}\xi_j \in \tilde{K}$ for all $f \in \cF$ and all $t \in [T_0,T_1]$ by virtue of Assumption~\ref{as:cF}, we can apply Lemma~\ref{lm:1layer_diff} to obtain
    \begin{align*}
    	&\bar{\beta}^{j+1} \Big( \big|  e^{t_{m-j-1}f'_{m-j-1}} \xi_j -   e^{t_{m-j-1}f_{m-j-1}} \xi_j \big|\Big) \\
    	& \qquad \qquad \leq \bar{\beta}^{j+1} \Big( \bar{B} \| f_{m-j-1} - f'_{m-j-1}\|_{C^0(\tilde{K})}\Big).
    \end{align*}
    Summing over $j$ and reindexing, we get \eqref{eq:mlayer_diff}.
\end{proof}
    
%%%%%%%%%%%%%%%%%%%%%%%%%%%%%%%%%%%%%%%%%%%%%%%%%%%%%%%%%%%%%%%%%%%%%%%%%%%%%%%%
\section{Conclusions}\label{sec:conclusions}
%%%%%%%%%%%%%%%%%%%%%%%%%%%%%%%%%%%%%%%%%%%%%%%%%%%%%%%%%%%%%%%%%%%%%%%%%%%%%%%%

In this work we have developed an encoder-decoder architecture for learning immersed submanifolds based on the construction in Sussmann's orbit theorem, and provided high-probability bounds on its generalization error. Capitalizing on the natural recursive structure present in differential equations allows for complex generative models to be built from comparatively simple vector field parameterizations. This architecture generalizes some well-known learning algorithms to the nonlinear setting. For example, principal component analysis (PCA) is recovered by choosing orthogonal projection encoders and taking $\cF$ to consist of constant vector fields. Various nonlinear generalizations of PCA can also be recovered by choosing an appropriate family of vector fields that enables the application of a nonlinear kernel to the input. Thus, applications to parametric manifold learning and nonlinear dimensionality reduction are practicable. We can also find utility in this architecture for high-dimensional sampling problems and generative modeling, since the latent space (of times) can be much simpler and of lower dimension than the complexity of features produced by the decoder.
	
This formulation naturally lends itself to a neural net implementation based on recent work on \textit{neural ODEs} \cite{chen_2018}, since the  compositional structure of deep neural nets --- i.e., composition of nonlinear layers --- is clearly manifested in the recursive structure of ODE flows, $e^{tf} = e^{(t-s)f} \circ e^{sf}$; an even richer class of models can be obtained by allowing \textit{controlled} ODEs. Indeed, the idea of treating flow maps as computational units and composing them to achieve more complex maps can also be found in previous geometric control literature. For instance, Agrachev and Caponigro \cite{agrachev_caponigro_2009} show that, under some conditions, any diffeomorphism isotopic to the identity can be represented as a composition of flows of finitely many smooth vector fields.  
	
A promising control-theoretic application of the proposed architecture is in the context of motion planning \cite{li_canny_1993} using hybrid (neurosymbolic) systems, both in continuous time \cite{Manikonda_99} and in discrete time \cite{Sontag_86}. Indeed,  we can associate to the collection $\{f_1,\ldots,f_m\}$ of vector fields learned by our procedure a controlled system $\dot{x} = f(x,u)$ with \textit{finite} control set $U = \{1,\ldots,m\}$ and $f(\cdot,u) \equiv f_u(\cdot)$. With this system, we can then think about approximately steering the initial point $\xi$ to an arbitrary target point $x \in K$ by successively appling the controls in $U$ according to the `program' $(a_1(x),\ldots,a_m(x))$. If the target point is sampled at random according to the same distribution $\mu$ that was used during training, we can then guarantee that the resulting steering plan will be nearly optimal (in a given class) with high probability. Going beyond the motion planning problem, we can consider other control-theoretic applications by treating the learned vector fields $f_1,\ldots,f_m$ as the generators of a control group in the sense of Lobry \cite{Lobry_1973}; various natural questions can then be phrased in terms of the Lie algebra generated by these $m$ vector fields.

A study of the inherent expressiveness of dynamic layers from a geometric control perspective is  also a promising direction for future work. Developing numerical schemes based on geometric integrators and applying order reduction techniques to efficiently implement the models discussed here as finite-depth nets is another topic of interest.
	
%%%%%%%%%%%%%%%%%%%%%%%%%%%%%%%%%%%%%%%%%%%%%%%%%%%%%%%%%%%%%%%%%%%%%%%%%%%%%%%%
\section*{Acknowledgments}
%%%%%%%%%%%%%%%%%%%%%%%%%%%%%%%%%%%%%%%%%%%%%%%%%%%%%%%%%%%%%%%%%%%%%%%%%%%%%%%%

The authors would like to thank Ali Belabbas, Ramon van Handel, Eduardo Sontag, and Matus Telgarsky for stimulating discussions that helped crystallize the ideas that led to this work.

\bibliography{learning_immersions_arxiv.bbl}

\end{document}